\title{Continuous Adaptation via Meta-Learning in Nonstationary and Competitive Environments}
\author{%
    Maruan~Al-Shedivat\thanks{%
    Correspondence: \href{http://maruan.alshedivat.com}{\texttt{maruan.alshedivat.com}}.
    Work done while MA and TB interned at OpenAI.} \\
    CMU
    \And
    Trapit~Bansal \\
    UMass Amherst
    \And
    Yura~Burda \\
    OpenAI
    \And
    Ilya~Sutskever \\
    OpenAI
    \AND
    Igor~Mordatch \\
    OpenAI
    \And
    Pieter~Abbeel \\
    UC Berkeley
}
\begin{document}

\maketitle

\begin{abstract}
The ability to continuously learn and adapt from limited experience in nonstationary environments is an important milestone on the path towards general intelligence.
In this paper, we cast the problem of continuous adaptation into the learning-to-learn framework.
We develop a simple gradient-based meta-learning algorithm suitable for adaptation in dynamically changing and adversarial scenarios.
Additionally, we design a new multi-agent competitive environment, \texttt{RoboSumo}, and define \emph{iterated adaptation games} for testing various aspects of continuous adaptation.
We demonstrate that meta-learning enables significantly more efficient adaptation than reactive baselines in the few-shot regime.
Our experiments with a population of agents that learn and compete suggest that meta-learners are the fittest.
\end{abstract}


\section{Introduction}

Recent progress in reinforcement learning (RL) has achieved very impressive results ranging from playing games~\citep{mnih2015human,silver2016mastering}, to applications in dialogue systems~\citep{li2016deep}, to robotics~\citep{levine2016end}.
Despite the progress, the learning algorithms for solving many of these tasks are designed to deal with stationary environments.
On the other hand, real-world is often nonstationary either due to complexity~\citep{sutton2007tracking}, changes in the dynamics or the objectives in the environment over the life-time of a system~\citep{thrun1998lifelong}, or presence of multiple learning actors~\citep{lowe2017multi,foerster2017comma}.
Nonstationarity breaks the standard assumptions and requires agents to continuously adapt, both at training and execution time, in order to succeed.

Learning under nonstationary conditions is challenging.
The classical approaches to dealing with nonstationarity are usually based on context detection~\citep{dasilva2006dealing} and tracking~\citep{sutton2007tracking}, i.e., reacting to the already happened changes in the environment by continuously fine-tuning the policy.
Unfortunately, modern deep RL algorithms, while able to achieve super-human performance on certain tasks, are known to be sample inefficient.
Nevertheless, nonstationarity allows only for limited interaction before the properties of the environment change.
Thus, it immediately puts learning into the few-shot regime and often renders simple fine-tuning methods impractical.

A nonstationary environment can be seen as a sequence of stationary tasks, and hence we propose to tackle it as a multi-task learning problem~\citep{caruana1998multitask}.
The learning-to-learn (or meta-learning) approaches~\citep{schmidhuber1987evolutionary,thrun1998learning} are particularly appealing in the few-shot regime, as they produce flexible learning rules that can generalize from only a handful of examples.
Meta-learning has shown promising results in the supervised domain and have gained a lot of attention from the research community recently~\citep[e.g.,][]{santoro2016memnetml,ravi2016optimization}.
In this paper, we develop a gradient-based meta-learning algorithm similar to~\citep{finn2017maml} and suitable for continuous adaptation of RL agents in nonstationary environments.
More concretely, our agents meta-learn to anticipate the changes in the environment and update their policies accordingly.

While virtually any changes in an environment could induce nonstationarity (e.g., changes in the physics or characteristics of the agent), environments with multiple agents are particularly challenging due to complexity of the emergent behavior and are of practical interest with applications ranging from multiplayer games~\citep{peng2017multiagent} to coordinating self-driving fleets~\cite{cao2013overview}.
Multi-agent environments are nonstationary from the perspective of any individual agent since all actors are learning and changing concurrently~\citep{lowe2017multi}.
In this paper, we consider the problem of \emph{continuous adaptation to a learning opponent} in a competitive multi-agent setting.

To this end, we design \texttt{RoboSumo}---a 3D environment with simulated physics that allows pairs of agents to compete against each other.
To test continuous adaptation, we introduce \emph{iterated adaptation games}---a new setting where a trained agent competes against the same opponent for multiple rounds of a repeated game, while both are allowed to update their policies and change their behaviors between the rounds.
In such iterated games, from the agent's perspective, the environment changes from round to round, and the agent ought to adapt in order to win the game.
Additionally, the competitive component of the environment makes it not only nonstationary but also adversarial, which provides a natural training curriculum and encourages learning robust strategies~\citep{bansal2018emergent}.

We evaluate our meta-learning agents along with a number of baselines on a (single-agent) locomotion task with handcrafted nonstationarity and on iterated adaptation games in \texttt{RoboSumo}.
Our results demonstrate that meta-learned strategies clearly dominate other adaptation methods in the few-shot regime in both single- and multi-agent settings.
Finally, we carry out a large-scale experiment where we train a diverse population of agents with different morphologies, policy architectures, and adaptation methods, and make them interact by competing against each other in iterated games.
We evaluate the agents based on their TrueSkills~\citep{herbrich2007trueskill} in these games, as well as evolve the population as whole for a few generations---the agents that lose disappear, while the winners get duplicated.
Our results suggest that the agents with meta-learned adaptation strategies end up being the fittest.
Videos that demonstrate adaptation behaviors are available at \url{https://goo.gl/tboqaN}.


\section{Related Work}

The problem of \emph{continuous adaptation} considered in this work is a variant of \emph{continual learning} \citep{ring1994continual,ring1997child} and is related to \emph{lifelong}~\citep{thrun1998learning,silver2013lifelong} and \emph{never-ending}~\citep{mitchell2015nel} learning.
Life-long learning systems aim at solving multiple tasks sequentially by efficiently transferring and utilizing knowledge from already learned tasks to new tasks while minimizing the effect of catastrophic forgetting~\citep{mccloskey1989catastrophic}.
Never-ending learning is concerned with mastering a fixed set of tasks in iterations, where the set keeps growing and the performance on all the tasks in the set keeps improving from iteration to iteration.

The scope of continuous adaptation is narrower and more precise.
While life-long and never-ending learning settings are defined as general multi-task problems~\citep{silver2013lifelong,mitchell2015nel}, continuous adaptation targets to solve a single but nonstationary task or environment.
The nonstationarity in the former two problems exists and is dictated by the selected sequence of tasks.
In the latter case, we assume that nonstationarity is caused by some underlying dynamics in the properties of a given task in the first place (e.g., changes in the behavior of other agents in a multi-agent setting).
Finally, in the life-long and never-ending scenarios the boundary between training and execution is blurred as such systems constantly operate in the training regime.
Continuous adaptation, on the other hand, expects a (potentially trained) agent to adapt to the changes in the environment at execution time under the pressure of limited data or interaction experience between the changes\footnote{The limited interaction aspect of continuous adaptation makes the problem somewhat similar to the recently proposed life-long few-shot learning~\citep{finn2017lifelong}.}.

Nonstationarity of multi-agent environments is a well known issue that has been extensively studied in the context of learning in simple multi-player iterated games (such as rock-paper-scissors) where each episode is one-shot interaction~\citep{singh2000nash,bowling2005convergence,conitzer2007awesome}.
In such games, discovering and converging to a Nash equilibrium strategy is a success for the learning agents.
Modeling and exploiting opponents~\citep{zhang2010multi,mealing2013opponent} or even their learning processes~\citep{foerster2017lola} is advantageous as it improves convergence or helps to discover equilibria of certain properties (e.g., leads to cooperative behavior).
In contrast, each episode in \texttt{RoboSumo} consists of multiple steps, happens in continuous time, and requires learning a good intra-episodic controller.
Finding Nash equilibria in such a setting is hard.
Thus, fast adaptation becomes one of the few viable strategies against changing opponents.

Our proposed method for continuous adaptation follows the general meta-learning paradigm~\citep{schmidhuber1987evolutionary,thrun1998learning}, i.e., it learns a high-level procedure that can be used to generate a good policy each time the environment changes.
There is a wealth of work on meta-learning, including methods for learning update rules for neural models that were explored in the past~\citep{bengio1990learning,bengio1992optimization,schmidhuber1992learning}, and more recent approaches that focused on learning optimizers for deep networks~\citep{hochreiter2001learning,andrychowicz2016learning,li2016learning,ravi2016optimization}, generating model parameters~\citep{ha2016hypernetworks,edwards2016towards,alshedivat2017contextual}, learning task embeddings~\citep{vinyals2016matching,snell2017prototypical} including memory-based approaches~\citep{santoro2016memnetml}, learning to learn implicitly via RL~\citep{wang2016learning,duan2016rl2}, or simply learning a good initialization~\citep{finn2017maml}.


\section{Method}\label{sec:method}

The problem of continuous adaptation in nonstationary environments immediately puts learning into the few-shot regime: the agent must learn from only limited amount of experience that it can collect before its environment changes.
Therefore, we build our method upon the previous work on gradient-based model-agnostic meta-learning (MAML) that has been shown successful in the few-shot settings~\citep{finn2017maml}.
In this section, we re-derive MAML for multi-task reinforcement learning from a probabilistic perspective~\citep[\emph{cf.}][]{grant2018recasting}, and then extend it to dynamically changing tasks.

\subsection{A probabilistic view of model-agnostic meta-learning (MAML)}\label{sec:maml}

Assume that we are given a distribution over tasks, $\Dc(T)$, where each task, $T$, is a tuple:
\begin{equation}
  \label{eq:task}
  T := \left(L_{T}, \prob[T]{\xv}, \prob[T]{\xv_{t + 1} \mid \xv_t, \av_t}, H\right)
\end{equation}
$L_{T}$ is a task-specific loss function that maps a trajectory, $\tauv := (\xv_0, \av_1, \xv_1, R_1, \dots, \av_H, \xv_H, R_H) \in \Tc$, to a loss value, i.e., $L_{T} : \Tc \mapsto \mathbb{R}$;
$\prob[T]{\xv}$ and $\prob[T]{\xv_{t + 1} \mid \xv_t, \av_t}$ define the Markovian dynamics of the environment in task $T$; $H$ denotes the horizon; observations, $\xv_t$, and actions, $\av_t$, are elements (typically, vectors) of the observation space, $\Xc$, and action space, $\Ac$, respectively.
The loss of a trajectory, $\tauv$, is the negative cumulative reward, $L_{T}(\tauv) := - \sum_{t=1}^H R_t$.


\begin{figure}[t!]
  \centering
  \begin{subfigure}[b]{0.16\textwidth}
  \flushleft
  \begin{tikzpicture}[
        inner/.style={draw, fill=yellow!20, thin, inner sep=2pt},
    ]
        \tikzstyle{latent} = [circle,fill=white,draw=black,inner sep=1pt,
        minimum size=22pt, font=\fontsize{9}{9}\selectfont, node distance=1]

        \node (theta) [latent] {$\theta$};
        \node (tau0) [obs, below=15pt of theta] {$\tauv_{\theta}$};
        \node (phi) [latent, right=15pt of theta] {$\phi$};
        \node (tau) [obs, below=15pt of phi] {$\tauv_{\phi}$};
        \node (T) [latent, below=15pt of tau0] {$T$};

        \edge {theta} {tau0};
        \edge {theta} {phi};
        \edge {tau0} {phi};
        \edge {phi} {tau};

        \edge {T} {tau0};
        \edge {T} {tau};
    \end{tikzpicture}
    \caption{}\label{fig:MAML}
  \end{subfigure}
  \qquad
  \begin{subfigure}[b]{0.435\textwidth}
  \flushleft
    \begin{tikzpicture}[
          inner/.style={draw, fill=yellow!20, thin, inner sep=2pt},
      ]
          \tikzstyle{latent} = [circle,fill=white,draw=black,inner sep=1pt,
          minimum size=22pt, font=\fontsize{8}{8}\selectfont, node distance=1]
          \tikzstyle{const} = [inner sep=2pt]
          \tikzstyle{obs_aux} = [obs, fill=gray!10, draw=gray!50]

          \node (theta1) [latent] {$\phi_{i-1}$};
          \node (tau1) [obs, below=15pt of theta1] {$\tauv_{i-1}$};
          \node (T1) [latent, below=15pt of tau1] {$T_{i-1}$};

          \node (theta2) [latent, right=20pt of theta1] {$\phi_{i}$};
          \node (tau2) [obs, below=15pt of theta2] {$\tauv_{i}$};
          \node (T2) [latent, below=15pt of tau2] {$T_{i}$};

          \node (theta3) [latent, right=20pt of theta2] {$\phi_{i+1}$};
          \node (tau3) [obs, below=15pt of theta3] {$\tauv_{i+1}$};
          \node (T3) [latent, below=15pt of tau3] {$T_{i+1}$};

          \node (pre_ellipsis_T) [const, left=15pt of T1] {$\dots$};
          \node (post_ellipsis_T) [const, right=15pt of T3] {$\dots$};
          \node (pre_ellipsis_theta) [const, left=15pt of theta1] {$\dots$};
          \node (post_ellipsis_theta) [const, right=15pt of theta3] {$\dots$};

          \edge {theta1} {tau1};
          \edge {theta1} {theta2};
          \edge {tau1} {theta2};

          \edge {theta2} {tau2};
          \edge {theta2} {theta3};
          \edge {tau2} {theta3};

          \edge {theta3} {tau3};

          \edge {T1} {tau1};
          \edge {T2} {tau2};
          \edge {T3} {tau3};
          \edge {T1} {T2};
          \edge {T2} {T3};

          \edge {pre_ellipsis_T} {T1};
          \edge {pre_ellipsis_theta} {theta1};
          \edge {T3} {post_ellipsis_T};
          \edge {theta3} {post_ellipsis_theta};
      \end{tikzpicture}
      \caption{}\label{fig:NSML}
    \end{subfigure}
    \qquad
    \begin{subfigure}[b]{0.27\textwidth}
        \centering
        \includegraphics[width=\textwidth]{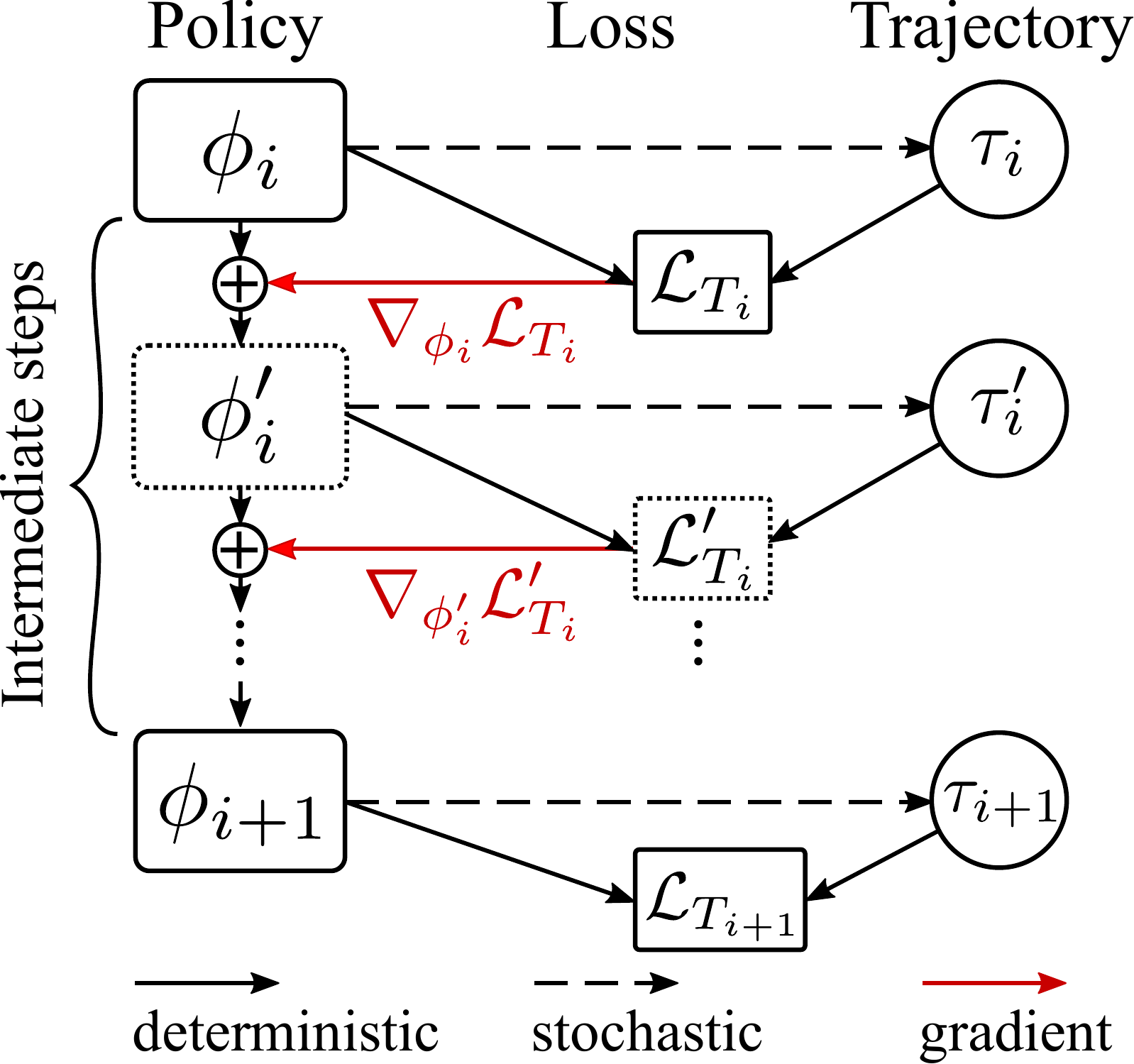}
        \caption{}\label{fig:comp-graph}
    \end{subfigure}
    \vspace{-1.2ex}
    \caption{%
    (a) A probabilistic model for MAML in a multi-task RL setting.
    The task, $T$, the policies, $\pi$, and the trajectories, $\tauv$, are all random variables with dependencies encoded in the edges of the given graph.
    (b) Our extended model suitable for continuous adaptation to a task changing dynamically due to non-stationarity of the environment.
    Policy and trajectories at a previous step are used to construct a new policy for the current step.
    (c) Computation graph for the meta-update from $\phi_i$ to $\phi_{i+1}$.
    Boxes represent replicas of the policy graphs with the specified parameters.
    The model is optimized via truncated backpropagation through time starting from $\Lc_{T_{i+1}}$.
    }
    \label{fig:MAML-continuous}
\end{figure}

The goal of meta-learning is to find a procedure which, given access to a limited experience on a task sampled from $\Dc(T)$, can produce a good policy for solving it.
More formally, after querying $K$ trajectories from a task $T \sim \Dc(T)$ under policy $\pi_\theta$, denoted $\tauv^{1:K}_{\theta}$, we would like to construct a new, task-specific policy, $\pi_\phi$, that would minimize the expected subsequent loss on the task $T$.
In particular, MAML constructs parameters of the task-specific policy, $\phi$, using gradient of $L_T$ w.r.t. $\theta$:
\begin{equation}
  \label{eq:maml-meta-update}
  \phi := \theta - \alpha \nabla_{\theta} L_T\left(\tauv^{1:K}_{\theta}\right),\, \text{where}\ L_T\left(\tauv^{1:K}_{\theta}\right) := \frac{1}{K} \sum_{k=1}^K L_T(\tauv^k_{\theta}),\, \text{and}\  \tauv^{k}_{\theta} \sim P_T(\tauv \mid \theta)
\end{equation}
We call \eqref{eq:maml-meta-update} the \emph{adaptation update} with a step $\alpha$.
The adaptation update is parametrized by $\theta$, which we optimize by minimizing the expected loss over the distribution of tasks, $\Dc(T)$---the \emph{meta-loss}:
\begin{equation}
  \label{eq:meta-optimization}
  \min_\theta \ep[T \sim \Dc(T)]{\Lc_T(\theta)},\, \text{where}\ \Lc_T(\theta) := \ep[\tauv^{1:K}_{\theta} \sim P_T(\tauv \mid \theta)]{\ep[\tauv_{\phi} \sim P_T(\tauv \mid \phi)]{L_T(\tauv_{\phi}) \mid \tauv^{1:K}_{\theta}, \theta}}
\end{equation}
where $\tauv_\theta$ and $\tauv_\phi$ are trajectories obtained under $\pi_\theta$ and $\pi_\phi$, respectively.

In general, we can think of the task, trajectories, and policies, as random variables (Fig.~\ref{fig:MAML}), where $\phi$ is generated from some conditional distribution $\prob[T]{\phi \mid \theta, \tauv_{1:k}}$.
The meta-update \eqref{eq:maml-meta-update} is equivalent to
assuming the delta distribution, $\prob[T]{\phi \mid \theta, \tauv_{1:k}} := \delta\left(\theta - \alpha \nabla_{\theta} \frac{1}{K} \sum_{k=1}^K L_T(\tauv_k) \right)$\footnote{\citet{grant2018recasting} similarly reinterpret adaptation updates (in non-RL settings) as Bayesian inference.}.
To optimize~\eqref{eq:meta-optimization}, we can use the policy gradient method~\citep{williams1992reinforce}, where the gradient of $\Lc_T$ is as follows:
\begin{equation}
    \label{eq:maml-pg-1step-stoch-estimate}
    \nabla_\theta \Lc_{T}(\theta) =
    \ep[\substack{\tauv^{1:K}_{\theta} \sim \prob[T]{\tauv \mid \theta} \\ \tauv_{\phi} \sim \prob[T]{\tauv \mid \phi}}]{L_T(\tauv_{\phi})
    \left[ \nabla_\theta \log \pi_{\phi}(\tauv_{\phi}) + \nabla_\theta \sum_{k=1}^K \log \pi_{\theta}(\tauv^k_{\theta}) \right]}
\end{equation}
The expected loss on a task, $\Lc_{T}$, can be optimized with trust-region policy (TRPO)~\citep{schulman2015trpo} or proximal policy (PPO)~\citep{schulman2017ppo} optimization methods.
For details and derivations please refer to Appendix~\ref{app:MAML-theory}.

\subsection{Continuous adaptation via meta-learning}\label{sec:nsml}

In the classical multi-task setting, we make no assumptions about the distribution of tasks, $\Dc(T)$.
When the environment is nonstationary, we can see it as a sequence of stationary tasks on a certain timescale where the tasks correspond to different dynamics of the environment.
Then, $\Dc(T)$ is defined by the environment changes, and the tasks become sequentially dependent.
Hence, we would like to exploit this dependence between consecutive tasks and meta-learn a rule that keeps updating the policy in a way that minimizes the total expected loss encountered during the interaction with the changing environment.
For instance, in the multi-agent setting, when playing against an opponent that changes its strategy incrementally (e.g., due to learning), our agent should ideally meta-learn to anticipate the changes and update its policy accordingly.


\begin{figure}[t!]
\begin{minipage}[t]{0.48\textwidth}
\vspace{0pt}
\begin{algorithm}[H]
    \caption{Meta-learning at training time.}
    \label{alg:training}
    \small
    \begin{algorithmic}[1]
        \INPUT Distribution over pairs of tasks, $\Pc(T_{i}, T_{i+1})$, learning rate, $\beta$.
        \STATE Randomly initialize $\theta$ and $\alpha$.
        \REPEAT
            \STATE Sample a batch of task pairs, $\{(T_{i}, T_{i+1})\}_{i=1}^n$.
            \FORALL{task pairs $(T_{i}, T_{i+1})$ in the batch}
                \STATE Sample traj. $\tauv^{1:K}_\theta$ from $T_{i}$ using $\pi_\theta$.
                \STATE Compute $\phi = \phi(\tauv^{1:K}_\theta, \theta, \alpha)$ as given in~\eqref{eq:meta-update-multistep}.
                \STATE Sample traj. $\tauv_\phi$ from $T_{i+1}$ using $\pi_\phi$.
            \ENDFOR
            \STATE Compute $\nabla_{\theta} \Lc_{T_{i}, T_{i+1}}$ and $\nabla_{\alpha} \Lc_{T_{i}, T_{i+1}}$ using $\tauv^{1:K}_\theta$ and $\tauv_\phi$ as given in~\eqref{eq:nonstationary-meta-pg}.
            \STATE Update $\theta \leftarrow \theta + \beta \nabla_{\theta} \Lc_{T}(\theta, \alpha)$.
            \STATE Update $\alpha \leftarrow \alpha + \beta \nabla_{\alpha} \Lc_{T}(\theta, \alpha)$.
        \UNTIL{Convergence}
        \OUTPUT Optimal $\theta^*$ and $\alpha^*$.
    \end{algorithmic}
\end{algorithm}
\end{minipage}
\hfill
\begin{minipage}[t]{0.48\textwidth}
\vspace{0pt}
\begin{algorithm}[H]
    \caption{Adaptation at execution time.}
    \label{alg:adaptation}
    \small
    \begin{algorithmic}[1]
        \INPUT A stream of tasks, $T_1, T_2, T_3, \dots$.
        \STATE Initialize $\phi = \theta$.
        \WHILE{there are new incoming tasks}
            \STATE Get a new task, $T_{i}$, from the stream.
            \STATE Solve $T_{i}$ using $\pi_\phi$ policy.
            \STATE While solving $T_{i}$, collect trajectories, $\tauv^{1:K}_{i, \phi}$.
            \STATE Update $\phi \leftarrow \phi(\tauv^{1:K}_{i, \phi}, \theta^*, \alpha^*)$ using\\importance-corrected meta-update as in \eqref{eq:meta-update-multistep-iw}.
        \ENDWHILE
    \end{algorithmic}
\end{algorithm}
\vspace{-0.2cm}
\begin{subfigure}[b]{\textwidth}
    \includegraphics[width=\textwidth]{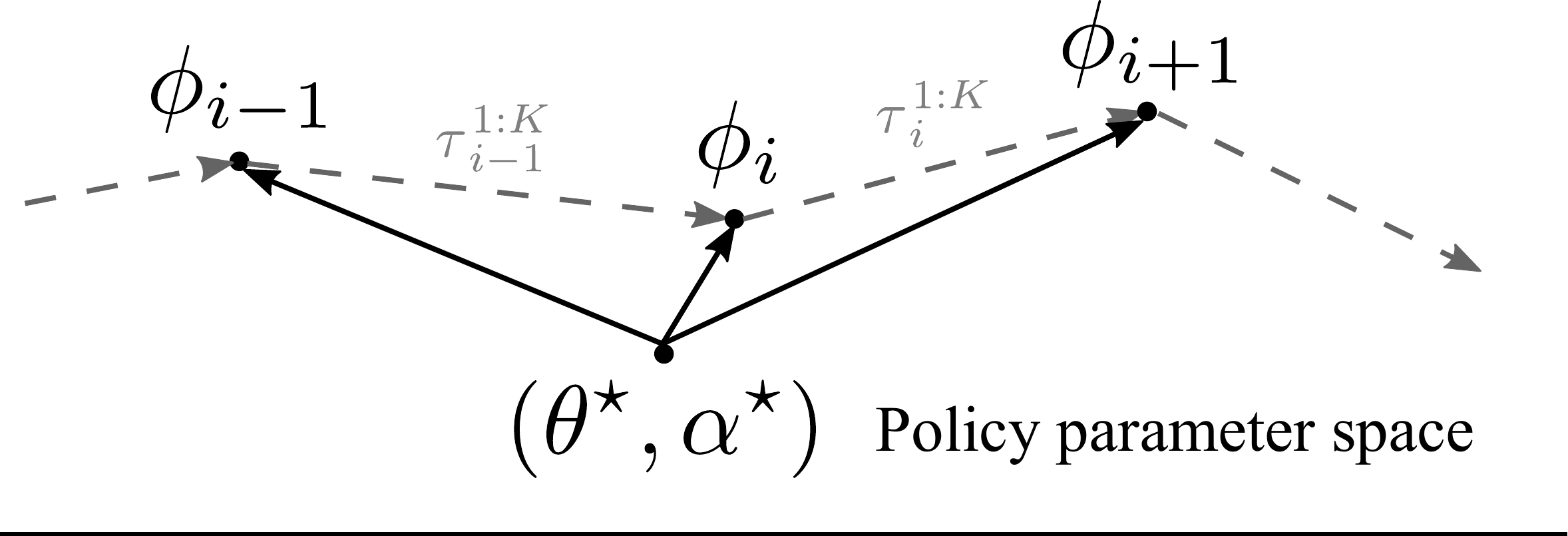}
\end{subfigure}
\end{minipage}
\end{figure}

In the probabilistic language, our nonstationary environment is equivalent to a distribution of tasks represented by a Markov chain (Fig.~\ref{fig:NSML}).
The goal is to minimize the expected loss over the chain of tasks of some length $L$:
\begin{equation}
  \label{eq:nonstationary-meta-optimization}
  \min_\theta \ep[\Pc(T_0), \Pc(T_{i+1} \mid T_i)]{\sum_{i=1}^L \Lc_{T_{i}, T_{i+1}}(\theta)}
\end{equation}
Here, $\Pc(T_0)$ and $\Pc(T_{i+1} \mid T_i)$ denote the initial and the transition probabilities in the Markov chain of tasks.
Note that (i) we deal with Markovian dynamics on two levels of hierarchy, where the upper level is the dynamics of the tasks and the lower level is the MDPs that represent particular tasks, and (ii) the objectives, $\Lc_{T_{i}, T_{i+1}}$, will depend on the way the meta-learning process is defined.
Since we are interested in adaptation updates that are optimal with respect to the Markovian transitions between the tasks, we define the meta-loss on a \emph{pair of consecutive tasks} as follows:
\begin{equation}
  \label{eq:pair-tasks-meta-loss}
  \Lc_{T_{i}, T_{i+1}}(\theta) := \ep[\tauv^{1:K}_{i, \theta} \sim P_{T_{i}}(\tauv \mid \theta)]{\ep[\tauv_{i+1,\phi} \sim P_{T_{i+1}}(\tauv \mid \phi)]{L_{T_{i+1}}(\tauv_{i+1,\phi}) \mid \tauv^{1:K}_{i,\theta}, \theta}}
\end{equation}
The principal difference between the loss in \eqref{eq:meta-optimization} and \eqref{eq:pair-tasks-meta-loss} is that trajectories $\tauv^{1:K}_{i,\theta}$ come from the current task, $T_{i}$, and are used to construct a policy, $\pi_\phi$, that is good for the upcoming task, $T_{i+1}$.
Note that even though the policy parameters, $\phi_i$, are sequentially dependent (Fig.~\ref{fig:NSML}), in \eqref{eq:pair-tasks-meta-loss} we always start from the initial parameters, $\theta$
\footnote{This is due to stability considerations. We find empirically that optimization over sequential updates from $\phi_{i}$ to $\phi_{i+1}$ is unstable, often tends to diverge, while starting from the same initialization leads to better behavior.}.
Hence, optimizing $\Lc_{T_{i}, T_{i+1}}(\theta)$ is equivalent to truncated backpropagation through time with a unit lag in the chain of tasks.

To construct parameters of the policy for task $T_{i+1}$, we start from $\theta$ and do multiple\footnote{Empirically, it turns out that constructing $\phi$ via multiple meta-gradient steps (between 2 and 5) with adaptive step sizes tends yield better results in practice.} meta-gradient steps with adaptive step sizes as follows (assuming the number of steps is $M$):
\begin{equation}
  \label{eq:meta-update-multistep}
  \begin{aligned}
    \phi_{i}^0 & := \theta, \quad \tauv^{1:K}_{\theta} \sim P_{T_i}(\tauv \mid \theta), \\
    \phi_{i}^m & := \phi_{i}^{m-1} - \alpha_m \nabla_{\phi_{i}^{m-1}} L_{T_i}\left(\tauv^{1:K}_{i, \phi_{i}^{m-1}}\right), \quad m = 1, \dots, M-1, \\
    \phi_{i+1} & := \phi_{i}^{M-1} - \alpha_M \nabla_{\phi_{i}^{M-1}} L_{T_i}\left(\tauv^{1:K}_{i, \phi_{i}^{M-1}}\right)
  \end{aligned}
\end{equation}
where $\{\alpha_m\}_{m=1}^M$ is a set of meta-gradient step sizes that are optimized jointly with $\theta$.
The computation graph for the meta-update is given in Fig.~\ref{fig:comp-graph}.
The expression for the policy gradient is the same as in~\eqref{eq:maml-pg-1step-stoch-estimate} but with the expectation is now taken w.r.t. to both $T_{i}$ and $T_{i+1}$:
\begin{equation}
    \label{eq:nonstationary-meta-pg}
    \begin{aligned}
        \MoveEqLeft \nabla_{\theta,\alpha} \Lc_{T_{i}, T_{i+1}}(\theta, \alpha) = \\
        & \ep[\substack{\tauv^{1:K}_{i, \theta} \sim P_{T_{i}}(\tauv \mid \theta) \\ \tauv_{i+1, \phi} \sim P_{T_{i+1}}(\tauv \mid \phi)}]{L_{T_{i+1}}(\tauv_{i+1,\phi})
        \left[ \nabla_{\theta,\alpha} \log \pi_{\phi}(\tauv_{i+1,\phi}) + \nabla_\theta \sum_{k=1}^K \log \pi_{\theta}(\tauv^k_{i,\theta}) \right]}
    \end{aligned}
\end{equation}
More details and the analog of the policy gradient theorem for our setting are given in Appendix~\ref{app:MAML-theory}.

Note that computing adaptation updates requires interacting with the environment under $\pi_\theta$ while computing the meta-loss, $\Lc_{T_{i}, T_{i+1}}$, requires using $\pi_\phi$, and hence, interacting with each task in the sequence twice.
This is often impossible at execution time, and hence we use slightly different algorithms at training and execution times.

\textbf{Meta-learning at training time.}
Once we have access to a distribution over pairs of consecutive tasks\footnote{Given a sequences of tasks generated by a nonstationary environment, $T_1, T_2, T_3, \dots, T_L$, we use the set of all pairs of consecutive tasks, $\{(T_{i-1}, T_i)\}_{i=1}^L$, as the training distribution.}, $\Pc(T_{i-1}, T_i)$, we can meta-learn the adaptation updates by optimizing $\theta$ and $\alpha$ jointly with a gradient method, as given in Algorithm~\ref{alg:training}.
We use $\pi_\theta$ to collect trajectories from $T_{i}$ and $\pi_\phi$ when interacting with $T_{i+1}$.
Intuitively, the algorithm is searching for $\theta$ and $\alpha$ such that the adaptation update~\eqref{eq:meta-update-multistep} computed on the trajectories from $T_{i}$ brings us to a policy, $\pi_\phi$, that is good for solving $T_{i+1}$.
The main assumption here is that the trajectories from $T_{i}$ contain some information about $T_{i+1}$.
Note that we treat adaptation steps as part of the computation graph (Fig.~\ref{fig:comp-graph}) and optimize $\theta$ and $\alpha$ via backpropagation through the entire graph, which requires computing second order derivatives.

\textbf{Adaptation at execution time.}
Note that to compute unbiased adaptation gradients at training time, we have to collect experience in $T_i$ using $\pi_\theta$.
At test time, due to environment nonstationarity, we usually do not have the luxury to access to the same task multiple times.
Thus, we keep acting according to $\pi_\phi$ and re-use past experience to compute updates of $\phi$ for each new incoming task (see Algorithm~\ref{alg:adaptation}).
To adjust for the fact that the past experience was collected under a policy different from $\pi_\theta$, we use importance weight correction.
In case of single step meta-update, we have:
\begin{equation}
  \label{eq:meta-update-multistep-iw}
  \phi_{i} := \theta - \alpha \frac{1}{K} \sum_{k=1}^K \left(\frac{\pi_{\theta}(\tauv^k)}{\pi_{\phi_{i-1}}(\tauv^k)}\right) \nabla_{\theta} L_{T_{i-1}}(\tauv^k), \quad \tauv^{1:K} \sim P_{T_{i-1}}(\tauv \mid \phi_{i-1}),
\end{equation}
where $\pi_{\phi_{i-1}}$ and $\pi_{\phi_i}$ are used to rollout from $T_{i-1}$ and $T_i$, respectively.
Extending importance weight correction to multi-step updates is straightforward and requires simply adding importance weights to each of the intermediate steps in \eqref{eq:meta-update-multistep}.


\section{Environments}

We have designed a set of environments for testing different aspects of continuous adaptation methods in two scenarios:
(i) simple environments that change from episode to episode according to some underlying dynamics, and
(ii) a competitive multi-agent environment, \texttt{RoboSumo}, that allows different agents to play sequences of games against each other and keep adapting to incremental changes in each other's policies.
All our environments are based on MuJoCo physics simulator~\citep{todorov2012mujoco}, and all agents are simple multi-leg robots, as shown in Fig.~\ref{fig:agents}.

\subsection{Dynamic}

First, we consider the problem of robotic locomotion in a changing environment.
We use a six-leg agent (Fig.~\ref{fig:dynamic-env}) that observes the absolute position and velocity of its body, the angles and velocities of its legs, and it acts by applying torques to its joints.
The agent is rewarded proportionally to its moving speed in a fixed direction.
To induce nonstationarity, we select a pair of legs of the agent and scale down the torques applied to the corresponding joints by a factor that linearly changes from 1 to 0 over the course of 7 episodes.
In other words, during the first episode all legs are fully functional, while during the last episode the agent has two legs fully paralyzed (even though the policy can generate torques, they are multiplied by 0 before being passed to the environment).
The goal of the agent is to learn to adapt from episode to episode by changing its gait so that it is able to move with a maximal speed in a given direction despite the changes in the environment~\citep[cf.][]{cully2015robots}.
Also, there are 15 ways to select a pair of legs of a six-leg creature which gives us 15 different nonstationary environments.
This allows us to use a subset of these environments for training and a separate held out set for testing.
The training and testing procedures are described in the next section.

\subsection{Competitive}

Our multi-agent environment, \texttt{RoboSumo}, allows agents to compete in the 1-vs-1 regime following the standard sumo rules\footnote{To win, the agent has to push the opponent out of the ring or make the opponent's body touch the ground.}.
We introduce three types of agents, \texttt{Ant}, \texttt{Bug}, and \texttt{Spider}, with different anatomies (Fig.~\ref{fig:agents}).
During the game, each agent observes positions of itself and the opponent, its own joint angles, the corresponding velocities, and the forces exerted on its own body (i.e., equivalent of tactile senses).
The action spaces are continuous.


\begin{figure}[t]
  \centering
  \begin{subfigure}[b]{0.49\textwidth}
    \centering
    \includegraphics[width=\textwidth]{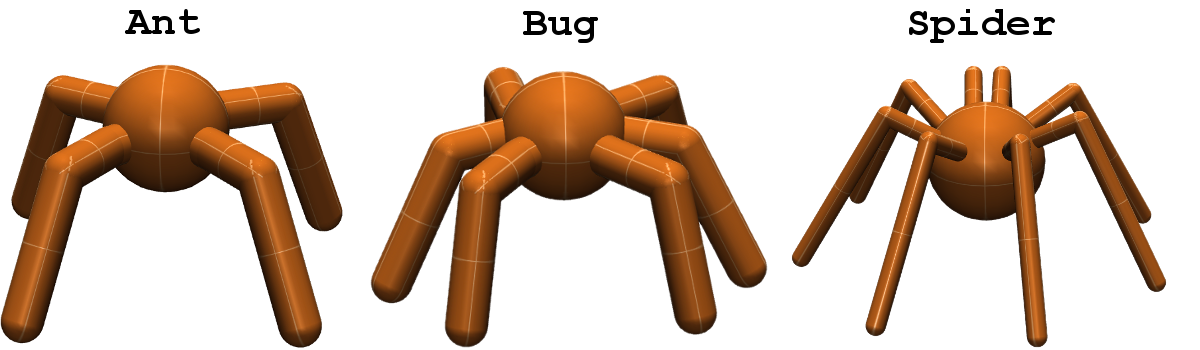}
    \caption{}\label{fig:agents}
  \end{subfigure}
  \hfill
  \begin{subfigure}[b]{0.22\textwidth}
    \centering
    \includegraphics[width=0.90\textwidth]{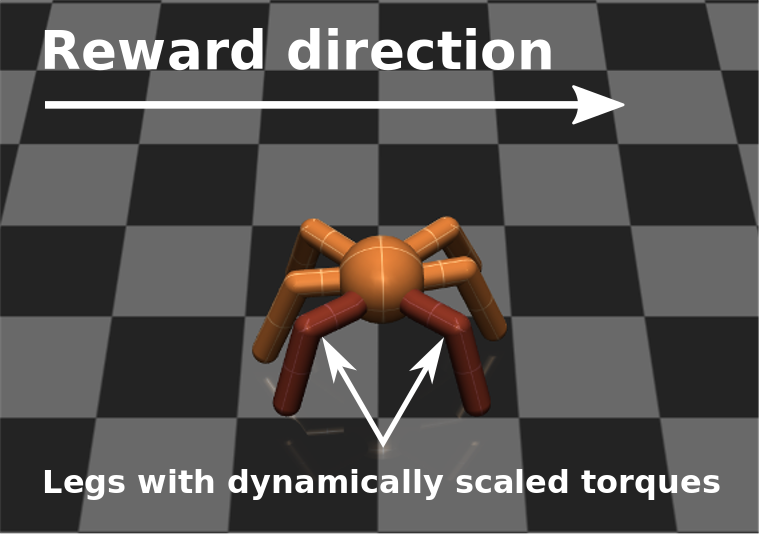}
    \caption{}\label{fig:dynamic-env}
  \end{subfigure}
  \begin{subfigure}[b]{0.268\textwidth}
    \centering
    \includegraphics[width=0.90\textwidth]{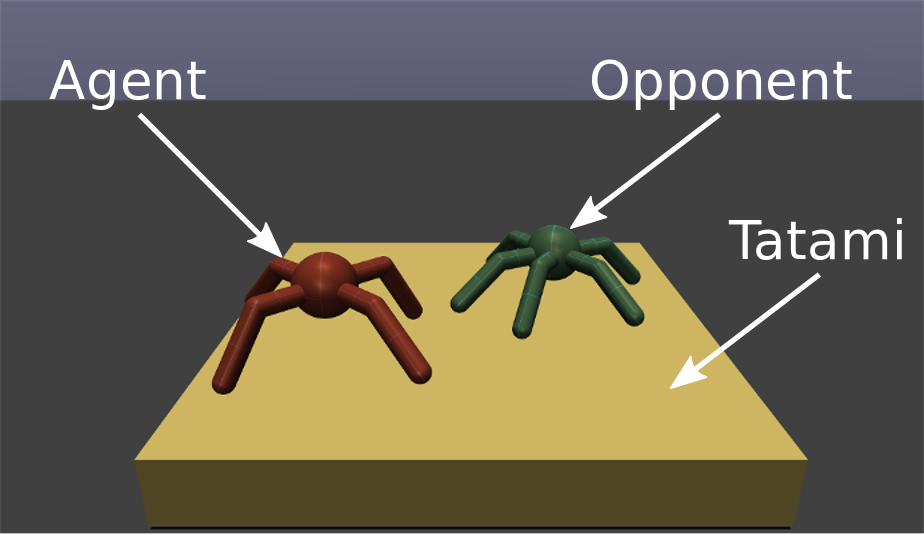}
    \caption{}\label{fig:sumo-env}
  \end{subfigure}
  \caption{%
    (a) The three types of agents used in experiments.
    The robots differ in the anatomy: the number of legs, their positions, and constraints on the thigh and knee joints.
    (b) The nonstationary locomotion environment.
    The torques applied to red-colored legs are scaled by a dynamically changing factor.
    (c) \texttt{RoboSumo} environment.
  }\label{fig:sumo}
\end{figure}


\begin{figure}[ht]
  \centering
  \includegraphics[width=0.80\textwidth]{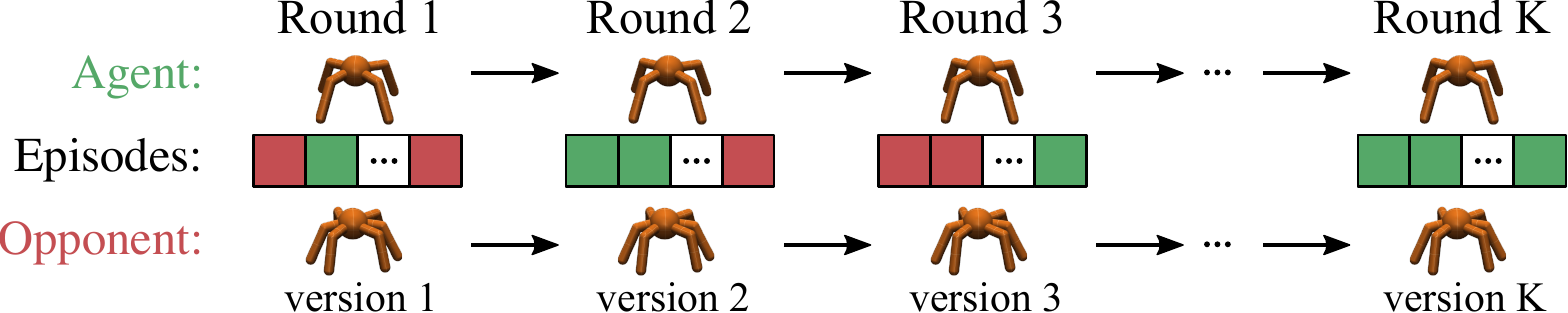}
  \caption{%
  An agent competes with an opponent in an iterated adaptation games that consist of multi-episode rounds.
  The agent wins a round if it wins the majority of episodes (wins and losses illustrated with color).
  Both the agent and its opponent may update their policies from round to round (denoted by the version number).
  }\label{fig:iag}
\end{figure}

\textbf{Iterated adaptation games.}
To test adaptation, we define the \emph{iterated adaptation game} (Fig.~\ref{fig:iag})---a game between a pair of agents that consists of $K$ rounds each of which consists of one or more fixed length episodes (500 time steps each).
The outcome of each round is either win, loss, or draw.
The agent that wins the majority of rounds (with at least 5\% margin) is declared the winner of the game.
There are two distinguishing aspects of our setup:
First, the agents are trained either via pure self-play or versus opponents from a fixed training collection.
At test time, they face a new opponent from a testing collection.
Second, the agents are allowed to learn (or adapt) at test time.
In particular, an agent should exploit the fact that it plays against the same opponent multiple consecutive rounds and try to adjust its behavior accordingly.
Since the opponent may also be adapting, the setup allows to test different continuous adaptation strategies, one versus the other.

\textbf{Reward shaping.}
In \texttt{RoboSumo}, rewards are naturally sparse: the winner gets +2000, the loser is penalized for -2000, and in case of a draw both opponents receive -1000 points.
To encourage fast learning at the early stages of training, we shape the rewards given to agents in the following way:
the agent (i) gets reward for staying closer to the center of the ring, for moving towards the opponent, and for exerting forces on the opponent's body, and (ii) gets penalty inversely proportional to the opponent's distance to the center of the ring.
At test time, the agents continue having access to the shaped reward as well and may use it to update their policies.
Throughout our experiments, we use discounted rewards with the discount factor, $\gamma = 0.995$.
More details are in Appendix~\ref{app:sumo-rewards}.

\textbf{Calibration.}
To study adaptation, we need a well-calibrated environment in which none of the agents has an initial advantage.
To ensure the balance, we increased the mass of the weaker agents (\texttt{Ant} and \texttt{Spider}) such that the win rates in games between one agent type versus the other type in the non-adaptation regime became almost equal (for details on calibration see Appendix~\ref{app:sumo-calibration}).

\section{Experiments}

Our goal is to test different adaptation strategies in the proposed nonstationary RL settings.
However, it is known that the test-time behavior of an agent may highly depend on a variety of factors besides the chosen adaptation method, including training curriculum, training algorithm, policy class, etc.
Hence, we first describe the precise setup that we use in our experiments to eliminate irrelevant factors and focus on the effects of adaptation.
Most of the low-level details are deferred to appendices.
Video highlights of our experiments are available at \url{https://goo.gl/tboqaN}.

\subsection{The setup}\label{sec:exp-setup}

\textbf{Policies.}
We consider 3 types of policy networks: (i) a 2-layer MLP, (ii) embedding (i.e., 1 fully-connected layer replicated across the time dimension) followed by a 1-layer LSTM, and (iii) RL$^2$~\citep{duan2016rl2} of the same architecture as (ii) which additionally takes previous reward and done signals as inputs at each step, keeps the recurrent state throughout the entire interaction with a given environment (or an opponent), and resets the state once the latter changes.
For advantage functions, we use networks of the same structure as for the corresponding policies and have no parameter sharing between the two.
Our meta-learning agents use the same policy and advantage function structures as the baselines and learn a 3-step meta-update with adaptive step sizes as given in \eqref{eq:meta-update-multistep}.
Illustrations and details on the architectures are given in Appendix~\ref{app:architecture-details}.

\textbf{Meta-learning.}
We compute meta-updates via gradients of the negative discounted rewards received during a number of previous interactions with the environment.
At training time, meta-learners interact with the environment twice, first using the initial policy, $\pi_\theta$, and then the meta-updated policy, $\pi_\phi$.
At test time, the agents are limited to interacting with the environment only once, and hence always act according to $\pi_\phi$ and compute meta-updates using importance-weight correction (see Sec.~\ref{sec:nsml} and Algorithm~\ref{alg:adaptation}).
Additionally, to reduce the variance of the meta-updates at test time, the agents store the experience collected during the interaction with the test environment (and the corresponding importance weights) into the experience buffer and keep re-using that experience to update $\pi_\phi$ as in \eqref{eq:meta-update-multistep}.
The size of the experience buffer is fixed to 3 episodes for nonstationary locomotion and 75 episodes for \texttt{RoboSumo}.
More details are given in Appendix~\ref{app:meta-details}.

\textbf{Adaptation baselines.}
We consider the following three baseline strategies:
\begin{itemize}[itemsep=1pt,topsep=0pt]
    \item[(i)] naive (or no adaptation),
    \item[(ii)] implicit adaptation via RL$^2$, and
    \item[(iii)] adaptation via tracking~\citep{sutton2007tracking} that keeps doing PPO updates at execution time.
\end{itemize}

\textbf{Training in nonstationary locomotion.}
We train all methods on the same collection of nonstationary locomotion environments constructed by choosing all possible pairs of legs whose joint torques are scaled except 3 pairs that are held out for testing (i.e., 12 training and 3 testing environments for the six-leg creature).
The agents are trained on the environments concurrently, i.e., to compute a policy update, we rollout from all environments in parallel and then compute, aggregate, and average the gradients (for details, see Appendix~\ref{app:PPO}).
LSTM policies retain their state over the course of 7 episodes in each environment.
Meta-learning agents compute meta-updates for each nonstationary environment separately.

\textbf{Training in \texttt{RoboSumo}.}
To ensure consistency of the training curriculum for all agents, we first pre-train a number of policies of each type for every agent type via pure self-play with the PPO algorithm~\citep{schulman2017ppo,bansal2018emergent}.
We snapshot and save versions of the pre-trained policies at each iteration.
This lets us train other agents to play against versions of the pre-trained opponents at various stages of mastery.
Next, we train the baselines and the meta-learning agents against the pool of pre-trained opponents\footnote{In competitive multi-agent environments, besides self-play, there are plenty of ways to train agents, e.g., train them in pairs against each other concurrently, or randomly match and switch opponents each few iterations. We found that concurrent training often leads to an unbalanced population of agents that have been trained under vastly different curricula and introduces spurious effects that interfere with our analysis of adaptation. Hence, we leave the study of adaptation in naturally emerging curricula in multi-agent settings to the future work.} concurrently.
At each iteration $k$ we (a) randomly select an opponent from the training pool, (b) sample a version of the opponent's policy to be in $[1, k]$ (this ensures that even when the opponent is strong, sometimes an undertrained version is selected which allows the agent learn to win at early stages), and (c) rollout against that opponent.
All baseline policies are trained with PPO; meta-learners also used PPO as the outer loop for optimizing $\theta$ and $\alpha$ parameters.
We retain the states of the LSTM policies over the course of interaction with the same version of the same opponent and reset it each time the opponent version is updated.
Similarly to the locomotion setup, meta-learners compute meta-updates for each opponent in the training pool separately.
A more detailed description of the distributed training is given in Appendix~\ref{app:PPO}.


\begin{figure}[t]
\centering
\includegraphics[width=\textwidth]{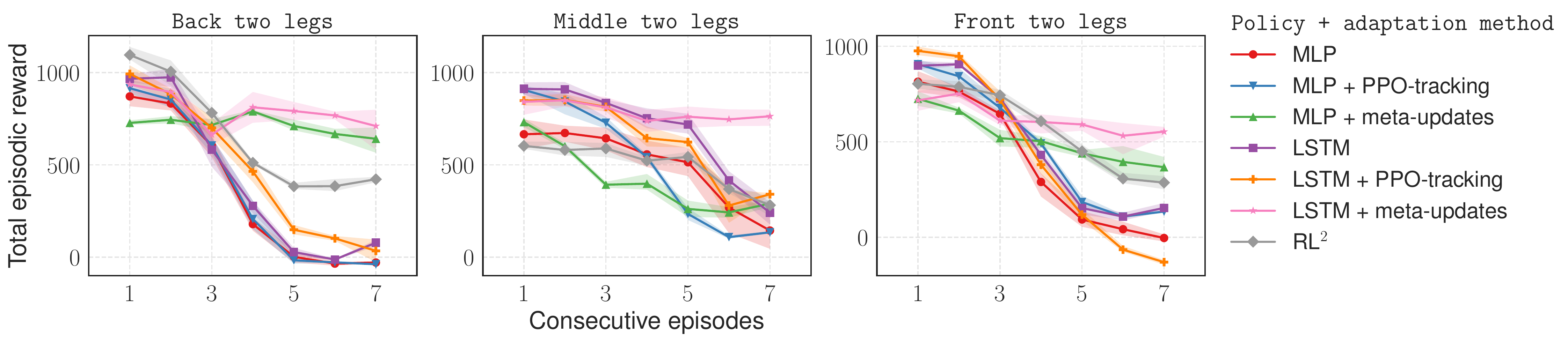}
\caption{%
Episodic rewards for 7 consecutive episodes in 3 held out nonstationary locomotion environments.
To evaluate adaptation strategies, we ran each of them in each environment for 7 episodes followed by a full reset of the environment, policy, and meta-updates (repeated 50 times).
Shaded regions are 95\% confidence intervals.
Best viewed in color.
}\label{fig:adaptation-loco}
\end{figure}

\textbf{Experimental design.}
We design our experiments to answer the following questions:
\begin{itemize}[itemsep=0.5pt,topsep=1pt,leftmargin=2em]
  \item When the interaction with the environment before it changes is strictly limited to one or very few episodes, what is the behavior of different adaptation methods in nonstationary locomotion and competitive multi-agent environments?
  \item What is the sample complexity of different methods, i.e., how many episodes is required for a method to successfully adapt to the changes?
  We test this by controlling the amount of experience the agent is allowed to get form the same environment before it changes.
\end{itemize}

Additionally, we ask the following questions specific to the competitive multi-agent setting:
\begin{itemize}[itemsep=0.5pt,topsep=1pt,leftmargin=2em]
  \item Given a diverse population of agents that have been trained under the same curriculum, how do different adaptation methods rank in a competition versus each other?
  \item When the population of agents is evolved for several generations---such that the agents interact with each other via iterated adaptation games, and those that lose disappear while the winners get duplicated---what happens with the proportions of different agents in the population?
\end{itemize}



\begin{figure}[t]
\centering
\begin{tikzpicture}
  \node (img3) [yshift=-7.0cm] {\includegraphics[width=.95\textwidth]{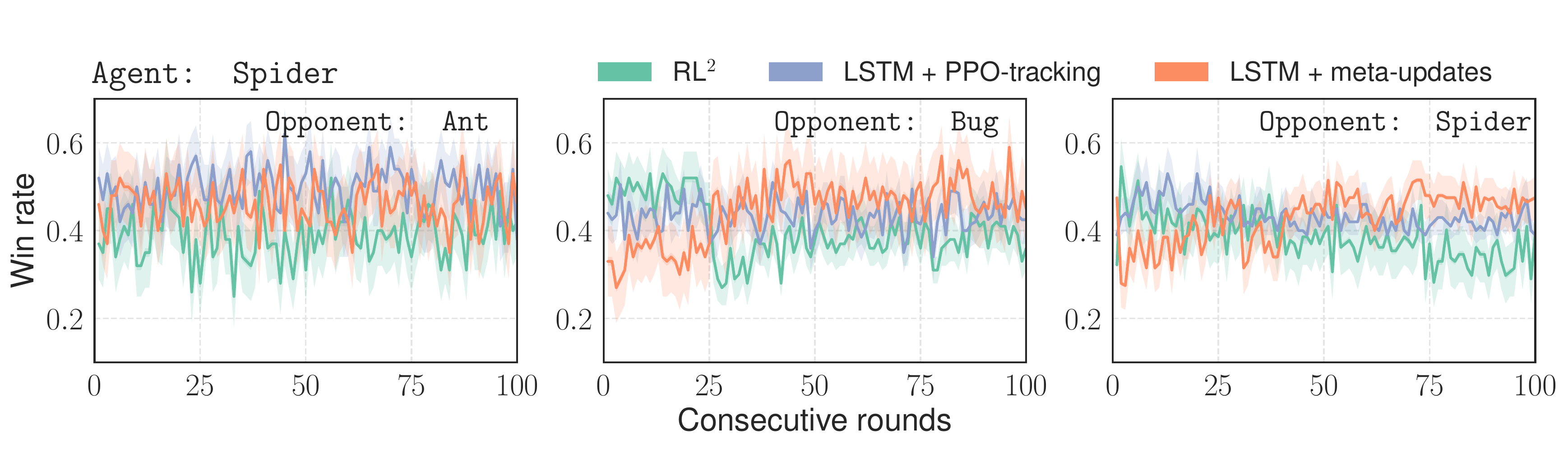}};
  \node (img2) [yshift=-3.5cm] {\includegraphics[width=.95\textwidth]{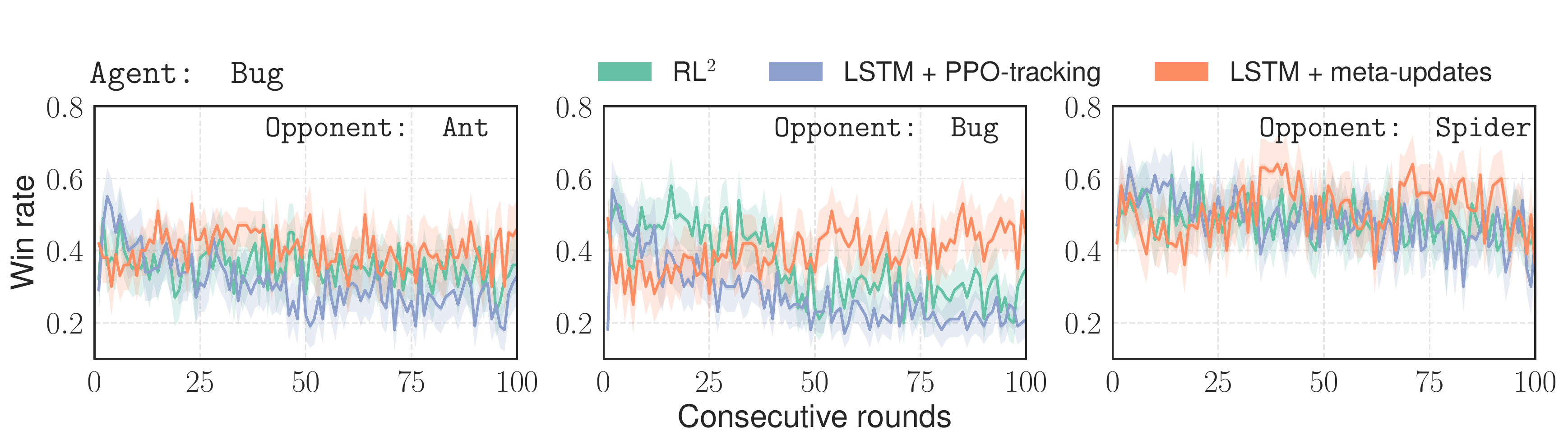}};
  \node (img1) {\includegraphics[width=.95\textwidth]{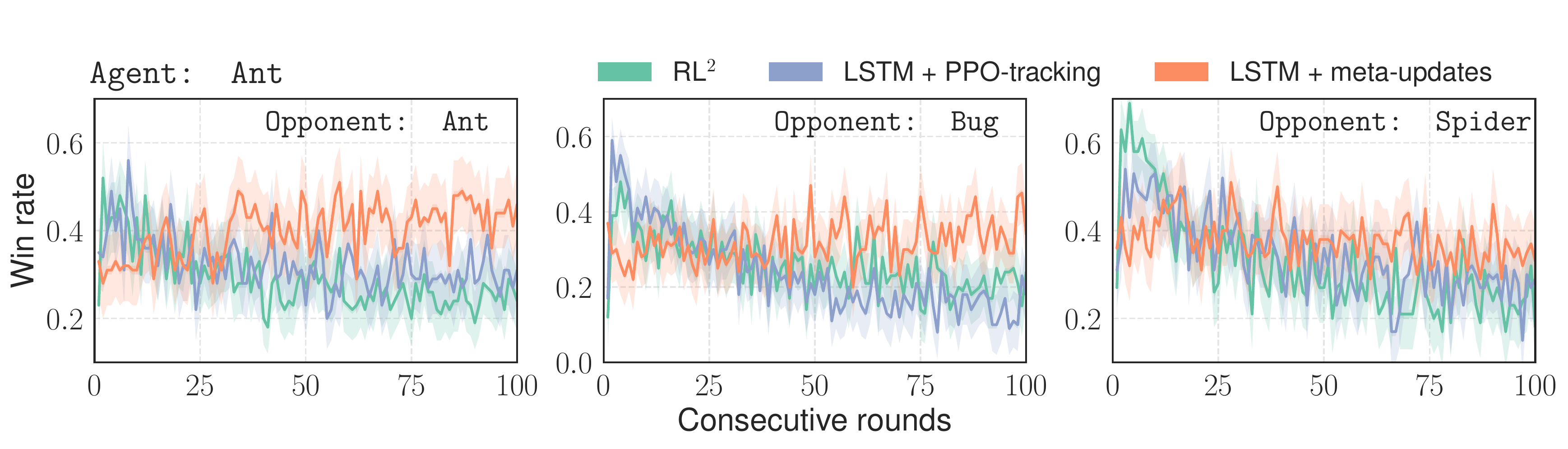}};
\end{tikzpicture}
\vspace{-2ex}
\caption{%
Win rates for different adaptation strategies in iterated games versus 3 different pre-trained opponents.
At test time, both agents and opponents started from versions 700.
Opponents' versions were increasing with each consecutive round as if they were learning via self-play, while agents were allowed to adapt only from the limited experience with a given opponent.
Each round consisted of 3 episodes.
Each iterated game was repeated 100 times; shaded regions denote bootstrapped 95\% confidence intervals; no smoothing.
Best viewed in color.
}\label{fig:adaptation-sumo}
\end{figure}

\subsection{Adaptation in the few-shot regime and sample complexity}

\textbf{Few-shot adaptation in nonstationary locomotion environments.}
Having trained baselines and meta-learning policies as described in Sec.~\ref{sec:exp-setup}, we selected 3 testing environments that corresponded to disabling 3 different pairs of legs of the six-leg agent: back, middle, and front legs.
The results are presented on Fig.~\ref{fig:adaptation-loco}.
Three observations:
First, during the very first episode, the meta-learned initial policy, $\pi_{\theta^\star}$, turns out to be suboptimal for the task (it underperforms compared to other policies).
However, after 1-2 episodes (and environment changes), it starts performing on par with other policies.
Second, by the 6th and 7th episodes, meta-updated policies perform much better than the rest.
Note that we use 3 gradient meta-updates for the adaptation of the meta-learners; the meta-updates are computed based on experience collected during the previous 2 episodes.
Finally, tracking is not able to improve upon the baseline without adaptation and sometimes leads to even worse results.

\textbf{Adaptation in \texttt{RoboSumo} under the few-shot constraint.}
To evaluate different adaptation methods in the competitive multi-agent setting consistently, we consider a variation of the iterated adaptation game, where changes in the opponent's policies at test time are pre-determined but unknown to the agents.
In particular, we pre-train 3 opponents (1 of each type, Fig.~\ref{fig:agents}) with LSTM policies with PPO via self-play (the same way as we pre-train the training pool of opponents, see Sec.~\ref{sec:exp-setup}) and snapshot their policies at each iteration.
Next, we run iterated games between our trained agents that use different adaptation algorithms versus policy snapshots of the pre-trained opponents.
Crucially, the policy version of the opponent keeps increasing from round to round as if it was training via self-play\footnote{At the beginning of the iterated game, both agents and their opponent start from version 700, i.e., from the policy obtained after 700 iterations (PPO epochs) of learning to ensure that the initial policy is reasonable.}.
The agents have to keep adapting to increasingly more competent versions of the opponent (see Fig.~\ref{fig:iag}).
This setup allows us to test different adaptation strategies consistently against the same learning opponents.

The results are given on Fig.~\ref{fig:adaptation-sumo}.
We note that meta-learned adaptation strategies, in most cases, are able to adapt and improve their win-rates within about 100 episodes of interaction with constantly improving opponents.
On the other hand, performance of the baselines often deteriorates during the rounds of iterated games.
Note that the pre-trained opponents were observing 90 episodes of self-play per iteration, while the agents have access to only 3 episodes per round.

\textbf{Sample complexity of adaptation in \texttt{RoboSumo}.}
Meta-learning helps to find an update suitable for fast or few-shot adaptation.
However, how do different adaptation methods behave when more experience is available?
To answer this question, we employ the same setup as previously and vary the number of episodes per round in the iterated game from 3 to 90.
Each iterated game is repeated 20 times, and we measure the win-rates during the last 25 rounds of the game.


\begin{figure}[t]
\centering
\includegraphics[width=\textwidth]{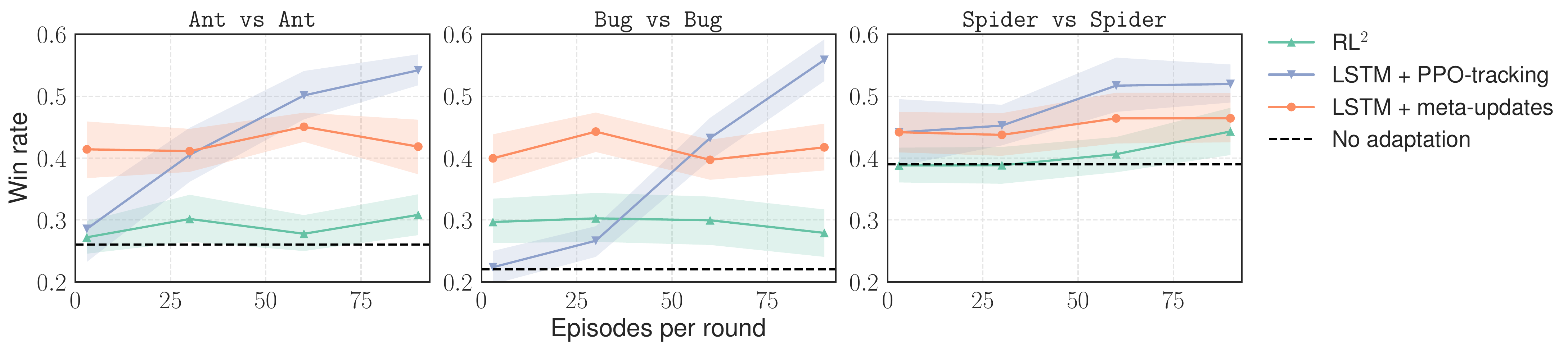}
\caption{%
The effect of increased number of episodes per round in the iterated games versus a learning opponent.
}\label{fig:sample-complexity}
\end{figure}

The results are presented on Fig.~\ref{fig:sample-complexity}.
When the number of episodes per round goes above 50, adaptation via tracking technically turns into ``learning at test time,'' and it is able to learn to compete against the self-trained opponents that it has never seen at training time.
The meta-learned adaptation strategy performed near constantly the same in both few-shot and standard regimes.
This suggests that the meta-learned strategy acquires a particular bias at training time that allows it to perform better from limited experience but also limits its capacity of utilizing more data.
Note that, by design, the meta-updates are fixed to only 3 gradient steps from $\theta^\star$ with step-sizes $\alpha^\star$ (learned at training), while tracking keeps updating the policy with PPO throughout the iterated game.
Allowing for meta-updates that become more flexible with the availability of data can help to overcome this limitation.
We leave this to future work.

\subsection{Evaluation on the population-level}

Combining different adaptation strategies with different policies and agents of different morphologies puts us in a situation where we have a diverse population of agents which we would like to rank according to the level of their mastery in adaptation (or find the ``fittest'').
To do so, we employ TrueSkill~\citep{herbrich2007trueskill}---a metric similar to the ELO rating, but more popular in 1-vs-1 competitive video-games.

In this experiment, we consider a population of 105 trained agents: 3 agent types, 7 different policy and adaptation combinations, and 5 different stages of training (from 500 to 2000 training iterations).
First, we assume that the initial distribution of any agent's skill is $\Nc(25, 25/3)$ and the default distance that guarantees about 76\% of winning, $\beta = 4.1667$.
Next, we randomly generate 1000 matches between pairs of opponents and let them adapt while competing with each other in 100-round iterated adaptation games (states of the agents are reset before each game).
After each game, we record the outcome and updated our belief about the skill of the corresponding agents using the TrueSkill algorithm\footnote{We used an implementation from \url{http://trueskill.org/}.}.
The distributions of the skill for the agents of each type after 1000 iterated adaptation games between randomly selected players from the pool are visualized in Fig.~\ref{fig:trueskill}.


\begin{figure}[h]
\centering
\includegraphics[width=\textwidth]{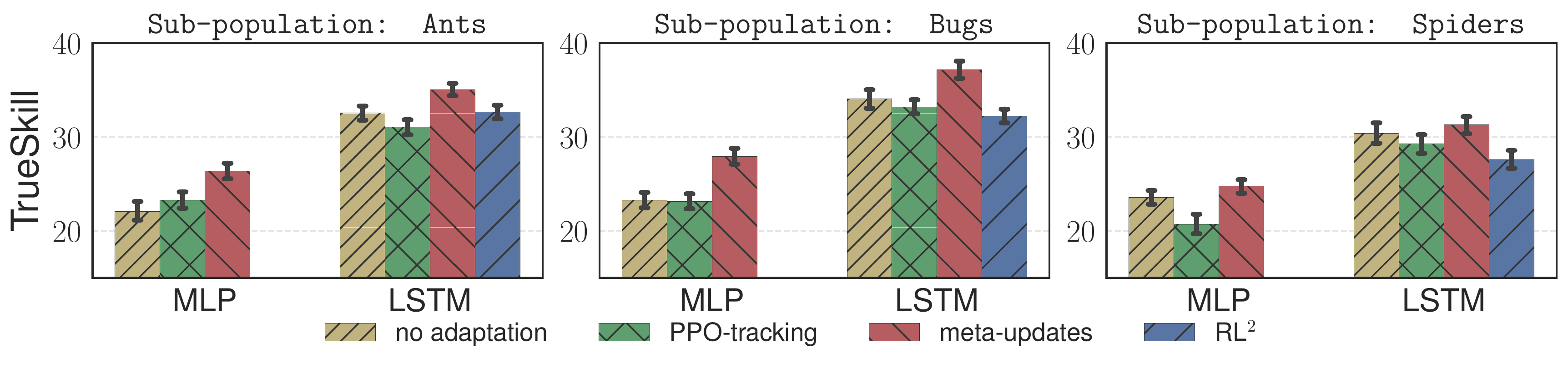}
\vspace{-3ex}
\caption{%
TrueSkill for the top-performing MLP- and LSTM-based agents.
TrueSkill was computed based on outcomes (win, loss, or draw) in 1000 iterated adaptation games (100 consecutive rounds per game, 3 episodes per round) between randomly selected pairs of opponents from a population of 105 pre-trained agents.
}\label{fig:trueskill}
\end{figure}

There are a few observations we can make:
First, recurrent policies were dominant.
Second, adaptation via RL$^2$ tended to perform equally or a little worse than plain LSTM with or without tracking in this setup.
Finally, agents that meta-learned adaptation rules at training time, consistently demonstrated higher skill scores in each of the categories corresponding to different policies and agent types.

Finally, we enlarge the population from 105 to 1050 agents by duplicating each of them 10 times and evolve it (in the ``natural selection'' sense) for several generations as follows.
Initially, we start with a balanced population of different creatures.
Next, we randomly match 1000 pairs of agents, make them play iterated adaptation games, remove the agents that lost from the population and duplicate the winners.
The same process is repeated 10 times.
The result is presented in Fig~\ref{fig:evolution}.
We see that many agents quickly disappear form initially uniform population and the meta-learners end up dominating.


\begin{figure}[ht]
\centering
\includegraphics[width=\textwidth]{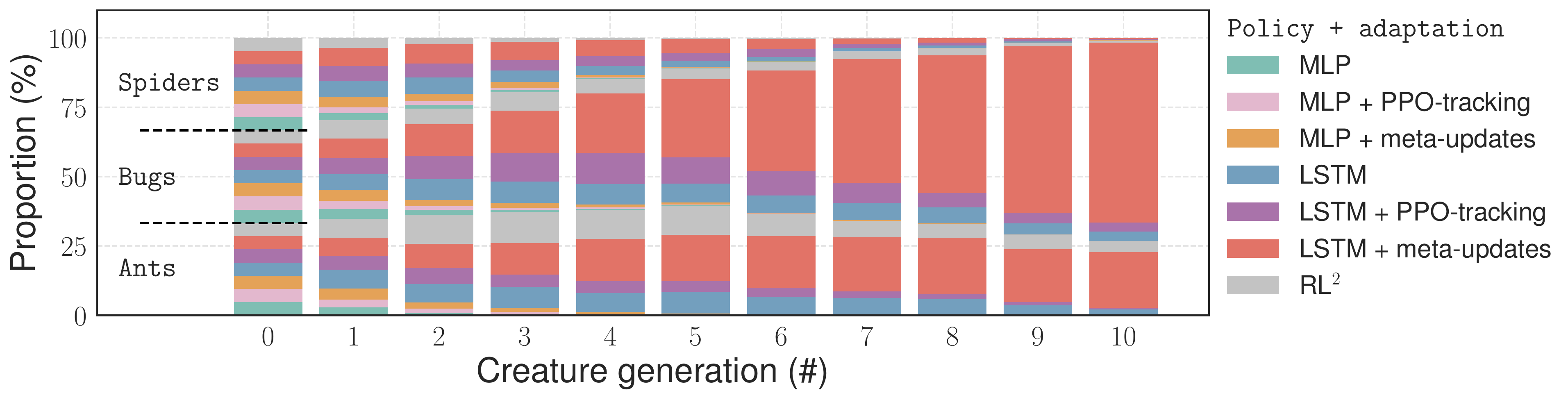}
\caption{%
Evolution of a population of 1050 agents for 10 generations.
Best viewed in color.
}\label{fig:evolution}
\end{figure}


\section{Conclusion and Future Directions}

In this work, we proposed a simple gradient-based meta-learning approach suitable for continuous adaptation in nonstationary environments.
The key idea of the method is to regard nonstationarity as a sequence of stationary tasks and train agents to exploit the dependencies between consecutive tasks such that they can handle similar nonstationarities at execution time.
We applied our method to nonstationary locomotion and within a competitive multi-agent setting.
For the latter, we designed the \texttt{RoboSumo} environment and defined iterated adaptation games that allowed us to test various aspects of adaptation strategies.
In both cases, meta-learned adaptation rules were more efficient than the baselines in the few-shot regime.
Additionally, agents that meta-learned to adapt demonstrated the highest level of skill when competing in iterated games against each other.

The problem of continuous adaptation in nonstationary and competitive environments is far from being solved, and this work is the first attempt to use meta-learning in such setup.
Indeed, our meta-learning algorithm has a few limiting assumptions and design choices that we have made mainly due to computational considerations.
First, our meta-learning rule is to one-step-ahead update of the policy and is computationally similar to backpropagation through time with a unit time lag.
This could potentially be extended to fully recurrent meta-updates that take into account the full history of interaction with the changing environment.
Additionally, our meta-updates were based on the gradients of a surrogate loss function.
While such updates explicitly optimized the loss, they required computing second order derivatives at training time, slowing down the training process by an order of magnitude compared to baselines.
Utilizing information provided by the loss but avoiding explicit backpropagation through the gradients would be more appealing and scalable.
Finally, our approach is unlikely to work with sparse rewards as the meta-updates use policy gradients and heavily rely on the reward signal.
Introducing auxiliary dense rewards designed to enable meta-learning is a potential way to overcome this issue that we would like to explore in the future work.


\section*{Acknowledgements}

We would like to thank Harri Edwards, Jakob Foerster, Aditya Grover, Aravind Rajeswaran, Vikash Kumar, Yuhuai Wu and many others at OpenAI for helpful comments and fruitful discussions.

\bibliography{references}
\bibliographystyle{iclr2018_conference}

\newpage
\appendix


\section{Derivations and the policy gradient theorem}
\label{app:MAML-theory}

In this section, we derive the policy gradient update for MAML as give in~\eqref{eq:maml-pg-1step-stoch-estimate} as well as formulate and equivalent of the policy gradient theorem~\citep{sutton2000pgt} in the learning-to-learn setting.

Our derivation is not bound to a particular form of the adaptation update.
In general, we are interested in meta-learning a \emph{procedure}, $f_{\theta}$, parametrized by $\theta$, which, given access to a limited experience on a task, can produce a good policy for solving it.
Note that $f_{\theta}$ is responsible for both collecting the initial experience and constructing the final policy for the given task.
For example, in case of MAML~\citep{finn2017maml}, $f_{\theta}$ is represented by the initial policy, $\pi_{\theta}$, and the adaptation update rule~\eqref{eq:maml-pg-1step-stoch-estimate} that produces $\pi_\phi$ with $\phi := \theta - \alpha \nabla_{\theta} L_T(\tauv_{\theta}^{1:K})$.

More formally, after querying $K$ trajectories, $\tauv_{\theta}^{1:K}$, we want to produce $\pi_\phi$ that minimizes the expected loss w.r.t. the distribution over tasks:
\begin{equation}
  \label{eq:meta-risk}
  \Lc(\theta) := \ep[T \sim \Dc(T)]{\ep[\tauv_{\theta}^{1:K} \sim p_T(\tauv \mid \theta)]{\ep[\tauv_{\phi} \sim p_T(\tauv \mid \phi)]{L_T(\tauv_{\phi}) \mid \tauv_{\theta}^{1:K}}}}
\end{equation}
Note that the inner-most expectation is conditional on the experience, $\tauv_{\theta}^{1:K}$, which our meta-learning procedure, $f_{\theta}$, collects to produce a task-specific policy, $\pi_\phi$.
Assuming that the loss $L_T(\tauv_{\theta}^{1:K})$ is linear in trajectories, and using linearity of expectations, we can drop the superscript $1:K$ and denote the trajectory sampled under $\phi_\theta$ for task $T_i$ simply as $\tauv_{\theta, i}$.
At training time, we are given a finite sample of tasks from the distribution $\Dc(T)$ and can search for $\hat\theta$ close to optimal by optimizing over the empirical distribution:
\begin{equation}
  \label{eq:meta-erm}
  \begin{aligned}
    \hat\theta := \argmin_{\theta} \hat \Lc(\theta),\,
    \text{where}\
    \hat \Lc(\theta) := \frac{1}{N} \sum_{i=1}^N
    \ep[\tauv_{\theta, i} \sim p_{T_i}(\tauv \mid \theta)]{\ep[\tauv_{\phi, i} \sim p_{T_i}(\tauv \mid \phi)]{L_{T_i}(\tauv_{\phi, i}) \mid \tauv_{\theta, i}}}
  \end{aligned}
\end{equation}

We re-write the objective function for task $T_i$ in \eqref{eq:meta-erm} more explicitly by expanding the expectations:
\begin{equation}
  \label{eq:meta-obj-explicit}
  \begin{aligned}
    \MoveEqLeft \Lc_{T_i}(\theta) := \ep[\tauv_{\theta, i} \sim p_{T_i}(\tauv \mid \theta)]{\ep[\tauv_{\phi, i} \sim p_{T_i}(\tauv \mid \phi)]{L_{T_i}(\tauv_{\phi, i}) \mid \tauv_{\theta, i}}} = \\
    & \int L_{T_i}(\tauv_{\phi, i})\ \prob[T_i]{\tauv_{\phi, i} \mid \phi}\ \prob[T_i]{\phi \mid \theta, \tauv_{\theta, i}}\ \prob[T_i]{\tauv_{\theta, i} \mid \theta}\ d\tauv_{\phi, i}\ d\phi\ d\tauv_{\theta, i}
  \end{aligned}
\end{equation}
Trajectories, $\tauv_{\phi, i}$ and $\tauv_{\theta, i}$, and parameters $\phi$ of the policy $\pi_\phi$ can be thought as random variables that we marginalize out to construct the objective that depends on $\theta$ only.
The adaptation update rule~\eqref{eq:maml-pg-1step-stoch-estimate} assumes the following $\prob[T_i]{\phi \mid \theta, \tauv_{\theta, i}}$:
\begin{equation}
    \label{eq:pg-maml-posterior}
    \prob[T_i]{\phi \mid \theta, \tauv_{\theta, i}} := \delta\left(\theta - \alpha \nabla_{\theta} \frac{1}{K} \sum_{k=1}^K L_{T_i}(\tauv_{\theta, i}^k) \right)
\end{equation}
Note that by specifying $\prob[T_i]{\phi \mid \theta, \tauv_{\theta, i}}$ differently, we may arrive at different meta-learning algorithms.
After plugging \eqref{eq:pg-maml-posterior} into \eqref{eq:meta-obj-explicit} and integrating out $\phi$, we get the following expected loss for task $T_i$ as a function of $\theta$:
\begin{equation}
  \label{eq:maml-expected-cum-loss}
  \begin{aligned}
    \MoveEqLeft \Lc_{T_i}(\theta) = \ep[\tauv_{\theta, i} \sim p_{T_i}(\tauv \mid \theta)]{\ep[\tauv_{\phi, i} \sim p_{T_i}(\tauv \mid \phi)]{L_{T_i}(\tauv_{\phi, i}) \mid \tauv_{\theta, i}}} = \\
    & \int L_{T_i}(\tauv_{\phi, i})\ \prob[T_i]{\tauv_{\phi, i} \mid \theta - \alpha \nabla_{\theta} \frac{1}{K} \sum_{k=1}^K L_{T_i}(\tauv_{\theta, i}^k)}\ \prob[T_i]{\tauv_{\theta, i} \mid \theta}\ d\tauv_{\phi, i}\ d\tauv_{\theta, i}
  \end{aligned}
\end{equation}
The gradient of \eqref{eq:maml-expected-cum-loss} will take the following form:
\begin{equation}
  \label{eq:maml-pg-1step}
  \begin{aligned}
    \nabla_\theta \Lc_{T_i}(\theta) =
    & \int \left[L_{T_i}(\tauv_{\phi, i})\ \nabla_\theta \log \prob[T_i]{\tauv_{\phi, i} \mid \phi}\right] \prob[T_i]{\tauv_{\phi, i} \mid \phi}\ \prob[T_i]{\tauv_{\theta, i} \mid \theta}\ d\tauv\ d\tauv_{\theta, i} + \\
    & \int \left[L_{T_i}(\tauv)\ \nabla_\theta \log \prob[T_i]{\tauv_{\theta, i} \mid \theta} \right] \prob[T_i]{\tauv \mid \phi}\ \prob[T_i]{\tauv_{\theta, i} \mid \theta}\ d\tauv\ d\tauv_{\theta, i}
  \end{aligned}
\end{equation}
where $\phi = \phi(\theta, \tauv_{\theta, i}^{1:K})$ as given in \eqref{eq:maml-expected-cum-loss}.
Note that the expression consists of two terms: the first term is the standard policy gradient w.r.t. the updated policy, $\pi_\phi$, while the second one is the policy gradient w.r.t. the original policy, $\pi_\phi$, that is used to collect $\tauv_{\theta, i}^{1:K}$.
If we were to omit marginalization of $\tauv_{\theta, i}^{1:K}$ (as it was done in the original paper~\citep{finn2017maml}), the terms would disappear.
Finally, the gradient can be re-written in a more succinct form:
\begin{equation}
    \label{eq:maml-pg-1step-stoch-estimate-app}
    \nabla_\theta \Lc_{T_i}(\theta) =
    \ep[\substack{\tauv_{\theta, i}^{1:K} \sim \prob[T_i]{\tauv \mid \theta} \\ \tauv \sim \prob[T_i]{\tauv \mid \phi}}]{L_{T_i}(\tauv)
    \left[ \nabla_\theta \log \pi_{\phi}(\tauv) + \nabla_\theta \sum_{k=1}^K \log \pi_{\theta}(\tauv_k) \right]}
\end{equation}
The update given in \eqref{eq:maml-pg-1step-stoch-estimate-app} is an unbiased estimate of the gradient as long as the loss $L_{T_i}$ is simply the sum of discounted rewards (i.e., it extends the classical REINFORCE algorithm~\citep{williams1992reinforce} to meta-learning).
Similarly, we can define $L_{T_i}$ that uses a value or advantage function and extend the policy gradient theorem~\cite{sutton2000pgt} to make it suitable for meta-learning.

\begin{theorem}[Meta policy gradient theorem]
  For any MDP, gradient of the value function w.r.t. $\theta$ takes the following form:
  \begin{equation}
    \label{eq:maml-pg-theorem}
    \begin{aligned}
      \MoveEqLeft \nabla_{\theta} V^{\theta}_{T}(\xv_0) = \\
      & \ep[\tauv_{1:K} \sim p_T(\tauv \mid \theta)]{\sum_{\xv} d_T^{\phi}(\xv) \sum_{a} \frac{\partial \pi_{\phi}(a \mid \xv)}{\partial \theta} Q^{\phi}_T(a, \xv)} + \\
      & \ep[\tauv_{1:K} \sim p_T(\tauv \mid \theta)] {\left(\frac{\partial}{\partial \theta} \sum_{k=1}^K \log \pi_{\theta}(\tauv_k)\right) \sum_{a} \pi_{\phi}(a \mid \xv_0) Q^{\phi}_T(a, \xv_0)},
    \end{aligned}
  \end{equation}
  where $d_T^\phi(\xv)$ is the stationary distribution under policy $\pi_\phi$.
\end{theorem}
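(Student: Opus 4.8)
### Proof Proposal

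The plan is to adapt the classical proof of the policy gradient theorem \citep{sutton2000pgt}, but carefully tracking the extra dependence that $\pi_\phi$ inherits from $\theta$ through the adaptation update $\phi = \phi(\theta, \tauv_{1:K})$. The key observation is that $\theta$ enters $V^\theta_T(\xv_0)$ in two conceptually distinct ways: (i) implicitly, via $\phi$, which controls the policy actually executed in the environment; and (ii) directly, via the sampling distribution of the pre-adaptation trajectories $\tauv_{1:K} \sim p_T(\tauv \mid \theta)$ that feed into the update. The total derivative is a sum of these two contributions, and I expect the two lines of \eqref{eq:maml-pg-theorem} to correspond exactly to them.

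First I would write $V^\theta_T(\xv_0) = \ep[\tauv_{1:K} \sim p_T(\tauv\mid\theta)]{V^{\phi}_T(\xv_0)}$, where $V^{\phi}_T$ is the ordinary value function of the (fixed) policy $\pi_\phi$. Differentiating with respect to $\theta$ and using the product rule on the integrand $V^{\phi}_T(\xv_0)\, p_T(\tauv_{1:K}\mid\theta)$, I split into: a term $\ep[\tauv_{1:K}\sim p_T(\cdot\mid\theta)]{\nabla_\theta V^{\phi}_T(\xv_0)}$ (differentiating the value function through $\phi$), and a term $\ep[\tauv_{1:K}\sim p_T(\cdot\mid\theta)]{V^{\phi}_T(\xv_0)\,\nabla_\theta \log p_T(\tauv_{1:K}\mid\theta)}$ (the score-function / REINFORCE term on the pre-adaptation sampling). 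For the first term, I would apply the standard policy gradient theorem to the fixed policy $\pi_\phi$ — unrolling $\nabla V^{\phi}_T(\xv_0) = \sum_a \nabla \pi_\phi(a\mid\xv_0) Q^\phi_T(a,\xv_0) + \sum_a \pi_\phi(a\mid\xv_0)\sum_{\xv'} P(\xv'\mid\xv_0,a)\nabla V^\phi_T(\xv')$ and iterating — which collapses the recursion into $\sum_{\xv} d^\phi_T(\xv)\sum_a \frac{\partial \pi_\phi(a\mid\xv)}{\partial\theta} Q^\phi_T(a,\xv)$, giving the first line. Here $\frac{\partial \pi_\phi(a\mid\xv)}{\partial\theta}$ is understood as the total derivative through the chain $\theta \mapsto \phi \mapsto \pi_\phi$, i.e. it contains the Jacobian $\partial\phi/\partial\theta$ (this is where the second-order derivatives enter, since $\phi$ itself is a gradient of $L_T$). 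For the second term, I would expand $\nabla_\theta \log p_T(\tauv_{1:K}\mid\theta) = \nabla_\theta \sum_{k=1}^K \log \pi_\theta(\tauv_k)$ (the dynamics terms $p_T(\xv_{t+1}\mid\xv_t,\av_t)$ and the initial distribution $p_T(\xv)$ are $\theta$-independent and drop out), and identify $\ep{V^\phi_T(\xv_0)} = \sum_a \pi_\phi(a\mid\xv_0) Q^\phi_T(a,\xv_0)$, yielding the second line.

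The main obstacle I anticipate is handling the exchange of $\nabla_\theta$ with the infinite sum / limit implicit in the value function when $H$ is unbounded, i.e. justifying that the recursion for $\nabla V^\phi_T$ converges and can be summed into the stationary-distribution form $d^\phi_T$ — exactly the same technical point as in the original theorem, so I would invoke the standard regularity assumptions (discounting or proper absorbing MDP, bounded rewards, smooth policies) that make the interchange valid. A secondary subtlety is being scrupulous about what "$\partial \pi_\phi(a\mid\xv)/\partial\theta$" means: since $\phi$ is a random variable depending on the sampled $\tauv_{1:K}$, the outer expectation over $\tauv_{1:K}$ must remain in force throughout, and the differentiation through $\phi$ must not accidentally re-differentiate the sampling distribution (that effect is already accounted for in the second term). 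Once these bookkeeping points are pinned down, the argument is a routine combination of the REINFORCE identity and the classical policy gradient unrolling, and the factorization into the two advertised terms falls out directly from the product rule applied to $V^\phi_T(\xv_0)\,p_T(\tauv_{1:K}\mid\theta)$.
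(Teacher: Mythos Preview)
Your proposal is correct and follows essentially the same route as the paper: write $V^\theta_T(\xv_0)=\ep[\tauv_{1:K}\sim p_T(\tauv\mid\theta)]{V^\phi_T(\xv_0)}$, apply the product rule to split the gradient into a score-function term on the pre-adaptation sampling and a term differentiating $V^\phi_T$ through $\phi$, then unroll the latter via the standard Sutton et al.\ recursion to obtain the stationary-distribution form. Your discussion of the chain $\theta\mapsto\phi\mapsto\pi_\phi$ and of the bookkeeping needed to avoid double-counting the sampling derivative is more explicit than the paper's, but the argument is the same.
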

\begin{proof}
We define task-specific value functions under the generated policy, $\pi_\phi$, as follows:
\begin{equation}
  \label{eq:value-functions}
  \begin{aligned}
    V^{\phi}_{T}(\xv_0) & = \ep[\tauv \sim p_T(\tauv \mid \phi)]{\sum_{t=k}^H \gamma^{t} R_T(\xv_t) \mid \xv_0}, \\
    Q^{\phi}_{T}(\xv_0, a_0) & = \ep[\tauv \sim p_T(\tauv \mid \phi)]{\sum_{t=k}^H \gamma^{t} R_T(\xv_t) \mid \xv_0, a_0},
  \end{aligned}
\end{equation}
where the expectations are taken w.r.t. the dynamics of the environment of the given task, $T$, and the policy, $\pi_\phi$.
Next, we need to marginalize out $\tauv_{1:K}$:
\begin{equation}
  V^{\theta}_{T}(\xv_0) = \ep[\tauv_{1:K} \sim p_T(\tauv \mid \theta)] {\ep[\tauv \sim p_T(\tauv \mid \phi)]{\sum_{t=k}^H \gamma^{t} R_T(\xv_t) \mid \xv_0}},
\end{equation}
and after the gradient w.r.t. $\theta$, we arrive at:
\begin{equation}
  \begin{aligned}
    \MoveEqLeft \nabla_{\theta} V^{\theta}_{T}(\xv_0) = \\
    & \ep[\tauv_{1:K} \sim p_T(\tauv \mid \theta)]{\sum_{a} \frac{\partial \pi_{\phi}(a \mid \xv_0)}{\partial \theta} Q^{\phi}_T(a, \xv_0) + \pi_{\phi}(a \mid \xv_0) \frac{\partial Q^{\phi}_T(a, \xv_0)}{\partial \theta}} + \\
    &\ep[\tauv_{1:K} \sim p_T(\tauv \mid \theta)] {\left(\sum_{k=1}^K \frac{\partial}{\partial \theta} \log \pi_{\theta}(\tauv_k)\right) \sum_{a} \pi_{\phi}(a \mid \xv_0) Q^{\phi}_T(a, \xv_0)},
  \end{aligned}
\end{equation}
where the first term is similar to the expression used in the original policy gradient theorem~\citep{sutton2000pgt} while the second one comes from differentiating trajectories $\tauv_{1:K}$ that depend on $\theta$.
Following~\citet{sutton2000pgt}, we unroll the derivative of the Q-function in the first term and arrive at the following final expression for the policy gradient:
\begin{equation}
  \begin{aligned}
    \MoveEqLeft \nabla_{\theta} V^{\theta}_{T}(\xv_0) = \\
    & \ep[\tauv_{1:K} \sim p_T(\tauv \mid \theta)]{\sum_{\xv} d_T^{\phi}(\xv) \sum_{a} \frac{\partial \pi_{\phi}(a \mid \xv)}{\partial \theta} Q^{\phi}_T(a, \xv)} + \\
    & \ep[\tauv_{1:K} \sim p_T(\tauv \mid \theta)] {\left(\frac{\partial}{\partial \theta} \sum_{k=1}^K \log \pi_{\theta}(\tauv_k)\right) \sum_{a} \pi_{\phi}(a \mid \xv_0) Q^{\phi}_T(a, \xv_0)}
  \end{aligned}
\end{equation}
\end{proof}
\begin{remark}
    The same theorem is applicable to the continuous setting with the only changes in the distributions used to compute expectations in \eqref{eq:maml-pg-theorem} and \eqref{eq:value-functions}.
    In particular, the outer expectation in \eqref{eq:maml-pg-theorem} should be taken w.r.t. $p_{T_i}(\tauv \mid \theta)$ while the inner expectation w.r.t. $p_{T_{i+1}}(\tauv \mid \phi)$.
\end{remark}

\subsection{Multiple adaptation gradient steps}

All our derivations so far assumed single step gradient-based adaptation update.
Experimentally, we found that the multi-step version of the update often leads to a more stable training and better test time performance.
In particular, we construct $\phi$ via intermediate $M$ gradient steps:
\begin{equation}
  \label{eq:multistep-meta-update}
  \begin{aligned}
    \phi^0 & := \theta, \quad \tauv^{1:K}_{\theta} \sim P_{T}(\tauv \mid \theta), \\
    \phi^m & := \phi^{m-1} - \alpha_m \nabla_{\phi^{m-1}} L_{T}\left(\tauv^{1:K}_{\phi^{m-1}}\right), \quad m = 1, \dots, M-1, \\
    \phi & := \phi^{M-1} - \alpha_M \nabla_{\phi^{M-1}} L_{T}\left(\tauv^{1:K}_{\phi^{M-1}}\right)
  \end{aligned}
\end{equation}
where $\phi^{m}$ are intermediate policy parameters.
Note that each intermediate step, $m$, requires interacting with the environment and sampling intermediate trajectories, $\tauv^{1:K}_{\phi^m}$.
To compute the policy gradient, we need to marginalize out all the intermediate random variables, $\pi_{\phi^m}$ and $\tauv^{1:K}_{\phi^m}$, $m = 1, \dots, M$.
The objective function \eqref{eq:meta-obj-explicit} takes the following form:
\begin{equation}
  \label{eq:maml-obj-explicit-multi}
  \begin{aligned}
    \MoveEqLeft \Lc_T(\theta) = \\
    & \int L_{T}(\tauv)\ \prob[T]{\tauv \mid \phi}\ \prob[T]{\phi \mid {\phi^{M-1}}, \tauv^{1:K}_{\phi^{M-1}}} d\tauv d\phi \times \\
    & \quad \prod_{m=1}^{M-2} \prob[T]{\tauv^{1:K}_{\phi^{m+1}} \mid {\phi^{m+1}}} \prob[T]{\phi^{m+1} \mid {\phi^m}, \tauv^{1:K}_{\phi^m}} d\tauv^{1:K}_{\phi^{m+1}} d\phi^{m+1} \times \\
    & \quad \prob[T]{\tauv^{1:K} \mid {\theta}} d\tauv^{1:K}
  \end{aligned}
\end{equation}
Since $\prob[T]{\phi^{m+1} \mid {\phi^m}, \tauv^{1:K}_{\phi^m}}$ at each intermediate steps are delta functions, the final expression for the multi-step MAML objective has the same form as \eqref{eq:maml-expected-cum-loss}, with integration taken w.r.t. all intermediate trajectories.
Similarly, an unbiased estimate of the gradient of the objective gets $M$ additional terms:
\begin{equation}
  \label{eq:maml-pg-multi-step-stoch-estimate}
    \nabla_\theta \Lc_{T} =
    \ep[\{\tauv^{1:K}_{\phi^m}\}_{m=0}^{M-1}, \tauv]{L_{T}(\tauv)
    \left[\nabla_\theta \log \pi_{\phi}(\tauv) + \sum_{m=0}^{M-1} \nabla_{\theta} \sum_{k=1}^K \log \pi_{\phi^m}(\tauv^k_{\phi^m})\right]},
\end{equation}
where the expectation is taken w.r.t. trajectories (including all intermediate ones).
Again, note that at training time we do not constrain the number of interactions with each particular environment and do rollout using each intermediate policy to compute updates.
At testing time, we interact with the environment only once and rely on the importance weight correction as described in Sec.~\ref{sec:nsml}.

\newpage
\section{Additional details on the architectures}
\label{app:architecture-details}

The neural architectures used for our policies and value functions are illustrated in Fig.~\ref{fig:architectures}.
Our MLP architectures were memory-less and reactive.
The LSTM architectures had used a fully connected embedding layer (with 64 hidden units) followed by a recurrent layer (also with 64 units).
The state in LSTM-based architectures was kept throughout each episode and reset to zeros at the beginning of each new episode.
The RL$^2$ architecture additionally took reward and done signals from the previous time step and kept the state throughout the whole interactions with a given environment (or opponent).
The recurrent architectures were unrolled for $T = 10$ time steps and optimized with PPO via backprop through time.


\begin{figure}[t]
\centering
\begin{subfigure}[b]{0.20\textwidth}
    \centering
    \includegraphics[width=\textwidth]{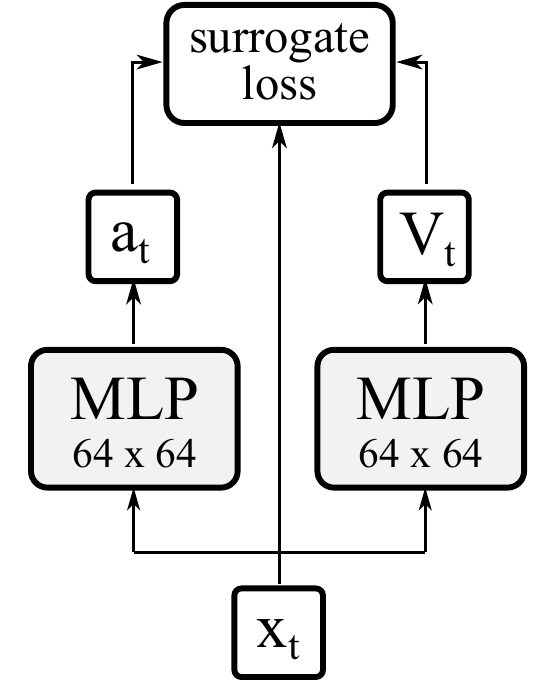}
    \caption{MLP}\label{fig:mlp-policy-value}
\end{subfigure}
\hfil
\begin{subfigure}[b]{0.20\textwidth}
    \centering
    \includegraphics[width=\textwidth]{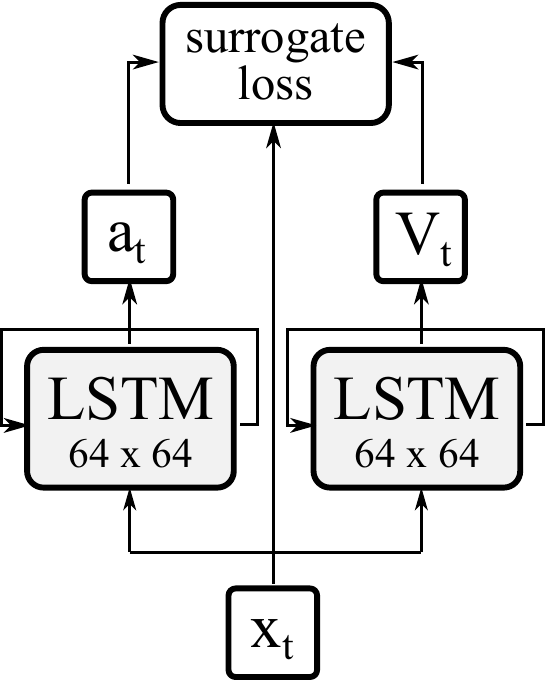}
    \caption{LSTM}\label{fig:lstm-policy-value}
\end{subfigure}
\hfil
\begin{subfigure}[b]{0.2\textwidth}
    \centering
    \includegraphics[width=\textwidth]{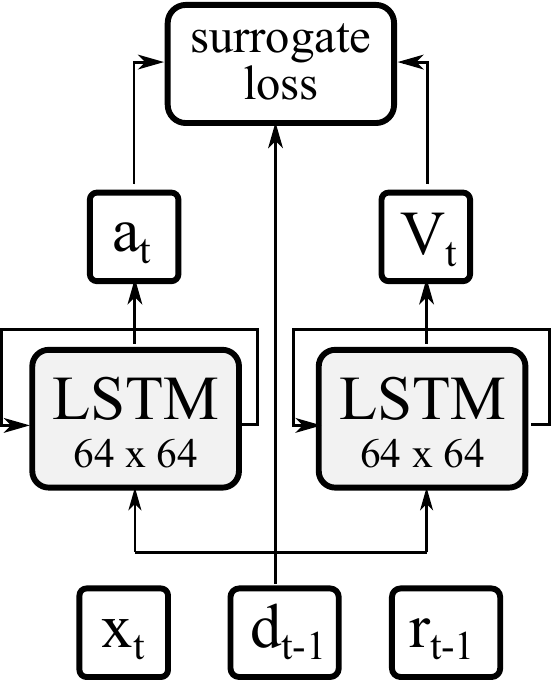}
    \caption{RL$^2$}\label{fig:rl2-policy-value}
\end{subfigure}
\caption{%
    Policy and value function architectures.
}\label{fig:architectures}
\end{figure}

\section{Additional details on meta-learning and optimization}

\subsection{Meta-updates for continuous adaptation}
\label{app:meta-details}

Our meta-learned adaptation methods were used with MLP and LSTM policies (Fig.~\ref{fig:architectures}).
The meta-updates were based on 3 gradient steps with adaptive step sizes $\alpha$ were initialized with 0.001.
There are a few additional details to note:
\begin{enumerate}[itemsep=0pt,topsep=0pt]
    \item $\theta$ and $\phi$ parameters were a concatenation of the policy and the value function parameters.
    \item At the initial stages of optimization, meta-gradient steps often tended to ``explode'', hence we clipped them by values norms to be between -0.1 and 0.1.
    \item We used different surrogate loss functions for the meta-updates and for the outer optimization.
    For meta-updates, we used the vanilla policy gradients computed on the negative discounted rewards, while for the outer optimization loop we used the PPO objective.
\end{enumerate}

\subsection{On PPO and its distributed implementation}
\label{app:PPO}

As mentioned in the main text and similar to~\citep{bansal2018emergent}, large batch sizes were used to ensure enough exploration throughout policy optimization and were critical for learning in the competitive setting of \texttt{RoboSumo}.
In our experiments, the epoch size of the PPO was set 32,000 episodes and the batch size was set to 8,000.
The PPO clipping hyperparameter was set to $\epsilon = 0.2$ and the KL penalty was set to 0.
In all our experiments, the learning rate (for meta-learning, the learning rate for $\theta$ and $\alpha$) was set to $0.0003$.
The generalized advantage function estimator (GAE)~\citep{schulman2015high} was optimized jointly with the policy (we used $\gamma = 0.995$ and $\lambda = 0.95$).

To train our agents in reasonable time, we used a distributed implementation of the PPO algorithm.
To do so, we versioned the agent's parameters (i.e., kept parameters after each update and assigned it a version number) and used a versioned queue for rollouts.
Multiple worker machines were generating rollouts in parallel for the most recent available version of the agent parameters and were pushing them into the versioned rollout queue.
The optimizer machine collected rollouts from the queue and made a PPO optimization steps (see~\citep{schulman2017ppo} for details) as soon as enough rollouts were available.

We trained agents on multiple environments simultaneously.
In nonstationary locomotion, each environment corresponded to a different pair of legs of the creature becoming dysfunctional.
In \texttt{RoboSumo}, each environment corresponded to a different opponent in the training pool.
Simultaneous training was achieved via assigning these environments to rollout workers uniformly at random, so that the rollouts in each mini-batch were guaranteed to come from all training environments.

\section{Additional details on the environments}
\label{app:sumo-details}

\subsection{Observation and action spaces}
\label{app:sumo-spaces}

Both observation and action spaces in \texttt{RoboSumo} continuous.
The observations of each agent consist of the position of its own body (7 dimensions that include 3 absolute coordinates in the global cartesian frame and 4 quaternions), position of the opponent's body (7 dimensions), its own joint angles and velocities (2 angles and 2 velocities per leg), and forces exerted on each part of its own body (6 dimensions for torso and 18 for each leg) and forces exerted on the opponent's torso (6 dimensions).
All forces were squared and clipped at 100.
Additionally, we normalized observations using a running mean and clipped their values between -5 and 5.
The action spaces had 2 dimensions per joint.
Table~\ref{tab:robosumo-env-details} summarizes the observation and action spaces for each agent type.


\begin{table}[ht]
\centering
\vspace{-2ex}
\caption{%
Dimensionality of the observation and action spaces of the agents in \texttt{RoboSumo}.}
\label{tab:robosumo-env-details}
\small
\begin{tabular}{@{}l|ccc|cc|c@{}}
    \toprule
    \multirow{3}{*}{Agent}  & \multicolumn{5}{c|}{Observation space} & \multirow{3}{*}{Action space} \\
           & \multicolumn{3}{c|}{Self} & \multicolumn{2}{c|}{Opponent} & \\
           & Coordinates & Velocities & Forces & Coordinates & Forces & \\
    \midrule
    \texttt{Ant}    & 15 & 14 & 78 & 7 & 6 & 8 \\
    \midrule
    \texttt{Bug}    & 19 & 18 & 114 & 7 & 6 & 12 \\

    \texttt{Spider} & 23 & 22 & 150 & 7 & 6 & 16 \\
    \bottomrule
\end{tabular}
\end{table}

Note that the agents observe neither any of the opponents velocities, nor positions of the opponent's limbs.
This allows us to keep the observation spaces consistent regardless of the type of the opponent.
However, even though the agents are blind to the opponent's limbs, they can sense them via the forces applied to the agents' bodies when in contact with the opponent.

\subsection{Shaped rewards}
\label{app:sumo-rewards}

In \texttt{RoboSumo}, the winner gets 2000 reward, the loser is penalized for -2000, and in case of draw both agents get -1000.
In addition to the sparse win/lose rewards, we used the following dense rewards to encourage fast learning at the early training stages:
\begin{itemize}[itemsep=0pt,topsep=0pt]
    \item \textbf{Quickly push the opponent outside.} The agent got penalty at each time step proportional to $\exp\{-d_\mathrm{opp}\}$ where $d_\mathrm{opp}$ was the distance of the opponent from the center of the ring.
    \item \textbf{Moving towards the opponent.} Reward at each time step proportional to magnitude of the velocity component towards the opponent.
    \item \textbf{Hit the opponent.} Reward proportional to the square of the total forces exerted on the opponent's torso.
    \item \textbf{Control penalty.} The $l_2$ penalty on the actions to prevent jittery/unnatural movements.
\end{itemize}

\subsection{\texttt{RoboSumo} calibration}
\label{app:sumo-calibration}

To calibrate the \texttt{RoboSumo} environment we used the following procedure.
First, we trained each agent via pure self-play with LSTM policy using PPO for the same number of iterations, tested them one against the other (without adaptation), and recorded the win rates (Table~\ref{tab:calibration}).
To ensure the balance, we kept increasing the mass of the weaker agents and repeated the calibration procedure until the win rates equilibrated.


\begin{table}[ht]
\centering
\vspace{-2ex}
\caption{%
Win rates for the first agent in the 1-vs-1 \texttt{RoboSumo} \emph{without adaptation} before and after calibration.}
\label{tab:calibration}
\small
\begin{tabular}{@{}lrrr@{}}
    \toprule
    Masses (\texttt{Ant}, \texttt{Bug}, \texttt{Spider})  & \texttt{Ant} vs. \texttt{Bug} & \texttt{Ant} vs. \texttt{Spider} & \texttt{Bug} vs. \texttt{Spider} \\
    \midrule
    Initial (10, 10, 10) & $25.2 \pm 3.9 \%$ & $83.6 \pm 3.1 \%$ & $90.2 \pm 2.7 \%$ \\
    Calibrated (13, 10, 39) & $50.6 \pm 5.6 \%$ & $51.6 \pm 3.4 \%$ & $51.7 \pm 2.8 \%$ \\
    \bottomrule
\end{tabular}
\end{table}

\section{Additional details on experiments}
\label{app:exp-details}

\subsection{Average win rates}

Table~\ref{tab:adaptation-sumo} gives average win rates for the last 25 rounds of iterated adaptation games played by different agents with different adaptation methods (win rates for each episode are visualized in Figure~\ref{fig:adaptation-sumo}).


\begin{table}[ht]
\centering
\caption{%
Average win-rates (95\% CI) in the last 25 rounds of the 100-round iterated adaptation games between different agents and different opponents.
The base policy and value function were LSTMs with 64 hidden units.
}
\label{tab:adaptation-sumo}
\def\arraystretch{1.2} 
\begin{tabular}{@{}l|l|r|r|r@{}}
    \toprule
    \multirow{2}{*}{\textbf{Agent}} & \multirow{2}{*}{\textbf{Opponent}} & \multicolumn{3}{c}{\textbf{Adaptation Strategy}} \\
    & & RL$^2$ & LSTM + PPO-tracking & LSTM + meta-updates \\
    \midrule
    \multirow{3}{*}{\texttt{Ant}}
    & \texttt{Ant}      & $24.9\ (5.4) \%$ & $30.0\ (6.7) \%$    & $\mathbf{44.0}\ (7.7) \%$ \\
    & \texttt{Bug}      & $21.0\ (6.3) \%$ & $15.6\ (7.1) \%$ & $\mathbf{34.6}\ (8.1) \%$ \\
    & \texttt{Spider}   & $24.8\ (10.5) \%$ & $27.6\ (8.4) \%$ & $35.1\ (7.7) \%$ \\
    \midrule
    \multirow{3}{*}{\texttt{Bug}}
    & \texttt{Ant}      & $33.5\ (6.9) \%$ & $26.6\ (7.4) \%$    & $\mathbf{39.5}\ (7.1) \%$ \\
    & \texttt{Bug}      & $28.6\ (7.4) \%$ & $21.2\ (4.2) \%$ & $\mathbf{43.7}\ (8.0) \%$ \\
    & \texttt{Spider}   & $45.8\ (8.1) \%$ & $42.6\ (12.9) \%$ & $52.0\ (13.9) \%$ \\
    \midrule
    \multirow{3}{*}{\texttt{Spider}}
    & \texttt{Ant}      & $40.3\ (9.7) \%$ & $48.0\ (9.8) \%$ & $45.3\ (10.9) \%$ \\
    & \texttt{Bug}      & $38.4\ (7.2) \%$ & $43.9\ (7.1) \%$ & $48.4\ (9.2) \%$ \\
    & \texttt{Spider}   & $33.9\ (7.2) \%$ & $42.2\ (3.9) \%$ & $46.7\ (3.8) \%$ \\
    \bottomrule
\end{tabular}
\end{table}

\subsection{TrueSkill rank of the top agents}


\begin{figure}[ht]
\begin{adjustbox}{valign=t}
\begin{minipage}{0.50\textwidth}
  \centering
  \small
  \def\arraystretch{1.2} 
  \begin{tabular}{llr}
      \toprule
      \textbf{Rank} & \textbf{Agent} & \textbf{TrueSkill rank*} \\
      \midrule
      1  & Bug + LSTM-meta          & $31.7$ \\
      2  & Ant + LSTM-meta          & $30.8$ \\
      3  & Bug + LSTM-track         & $29.1$ \\
      4  & Ant + RL$^2$             & $28.6$ \\
      5  & Ant + LSTM               & $28.4$ \\
      \midrule
      6  & Bug + MLP-meta           & $23.4$ \\
      7  & Ant + MLP-meta           & $21.6$ \\
      8  & Spider + MLP-meta        & $20.5$ \\
      9  & Spider + MLP             & $19.0$ \\
      10 & Bug + MLP-track          & $18.9$ \\
      \bottomrule
  \end{tabular}\vspace{0.2ex}
  \begin{minipage}{0.98\textwidth}
      \scriptsize
      * The rank is a conservative estimate of the skill, $r = \mu - 3\sigma$, to ensure that the actual skill of the agent is higher with 99\% confidence.
  \end{minipage}
\end{minipage}
\end{adjustbox}%
\begin{adjustbox}{valign=t}
\begin{minipage}{0.50\textwidth}
  \centering\vspace{-0.5ex}
  \includegraphics[width=0.97\textwidth]{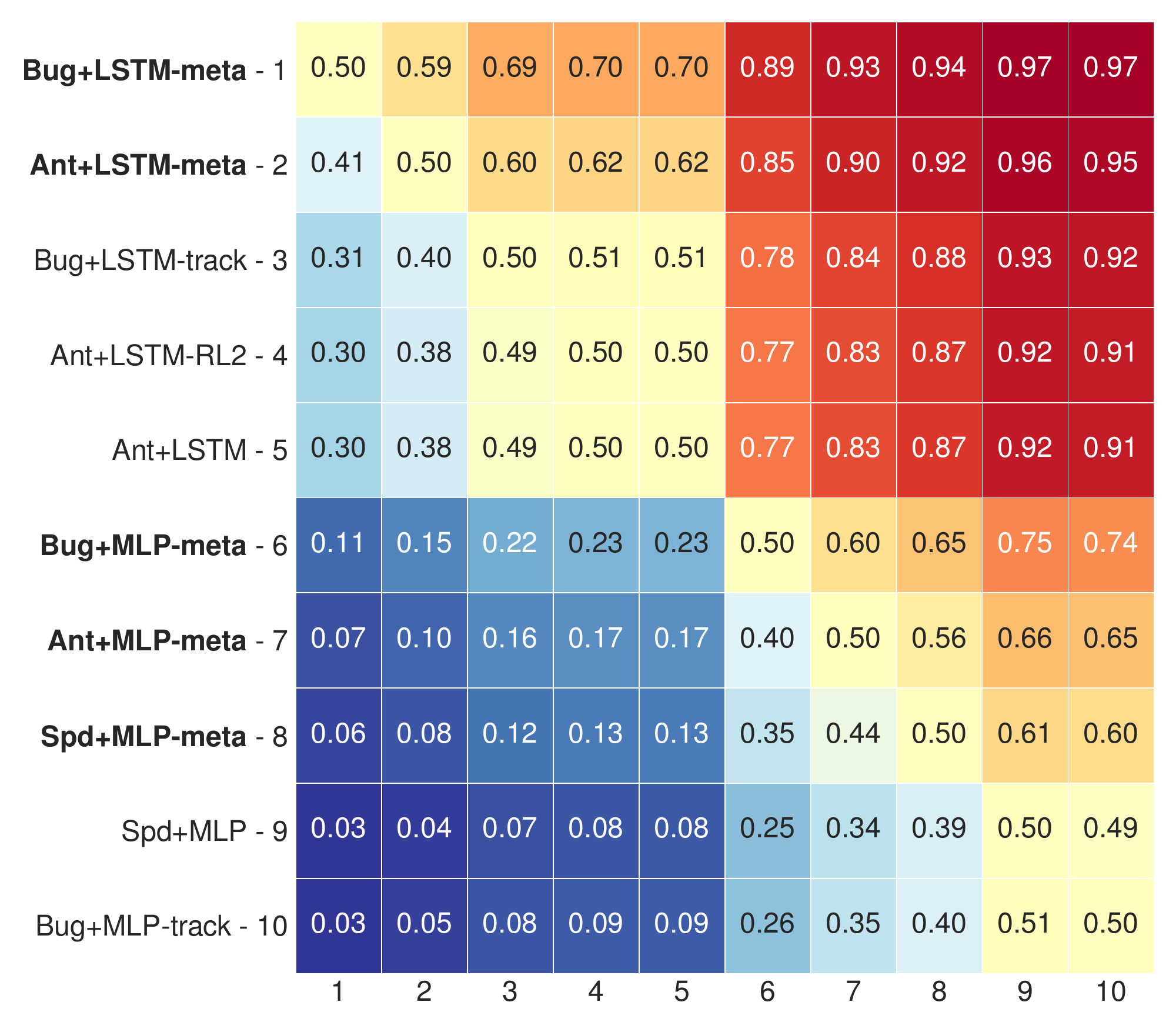}
\end{minipage}
\end{adjustbox}
\captionlistentry[table]{}
\captionsetup{labelformat=andfigure}
\caption{%
Top-5 agents with MLP and LSTM policies from the population ranked by TrueSkill.
The heatmap shows \emph{a priori} win-rates in iterated games based on TrueSkill for the top agents against each other.
}\label{fig:trueskill-heatmap}
\end{figure}

Since TrueSkill represents the belief about the skill of an agent as a normal distribution (i.e., with two parameters, $\mu$ and $\sigma$), we can use it to infer \emph{a priori} probability of an agent, $a$, winning against its opponent, $o$, as follows~\citep{herbrich2007trueskill}:
\begin{equation}
    \label{eq:trueskill-winrate-apriori}
    P(a \text{ wins } o) = \Phi\left(\frac{\mu_a - \mu_o}{\sqrt{2 \beta^2 + \sigma_a^2 + \sigma_o^2}}\right),\, \text{where}\ \Phi(x) := \frac{1}{2}\left[1 + \mathrm{erf}\left(\frac{x}{\sqrt{2}}\right)\right]
\end{equation}
The ranking of the top-5 agents with MLP and LSTM policies according to their TrueSkill is given in Tab.~1 and the \emph{a priori} win rates in Fig.~\ref{fig:trueskill-heatmap}.
Note that within the LSTM and MLP categories, the best meta-learners are 10 to 25\% more likely to win the best agents that use other adaptation strategies.

\subsection{Intolerance to large distributional shifts}

Continuous adaptation via meta-learning assumes consistency in the changes of the environment or the opponent.
What happens if the changes are drastic?
Unfortunately, the training process of our meta-learning procedure turns out to be sensitive to such shifts and can diverge when the distributional shifts from iteration to iteration are large.
Fig.~\ref{fig:robustness} shows the training curves for a meta-learning agent with MLP policy trained against versions of an MLP opponent pre-trained via self-play.
At each iteration, we kept updating the opponent policy by 1 to 10 steps.
The meta-learned policy was able to achieve non-negative rewards by the end of training only when the opponent was changing up to 4 steps per iteration.


\begin{figure}[ht]
\centering
\includegraphics[width=0.45\textwidth]{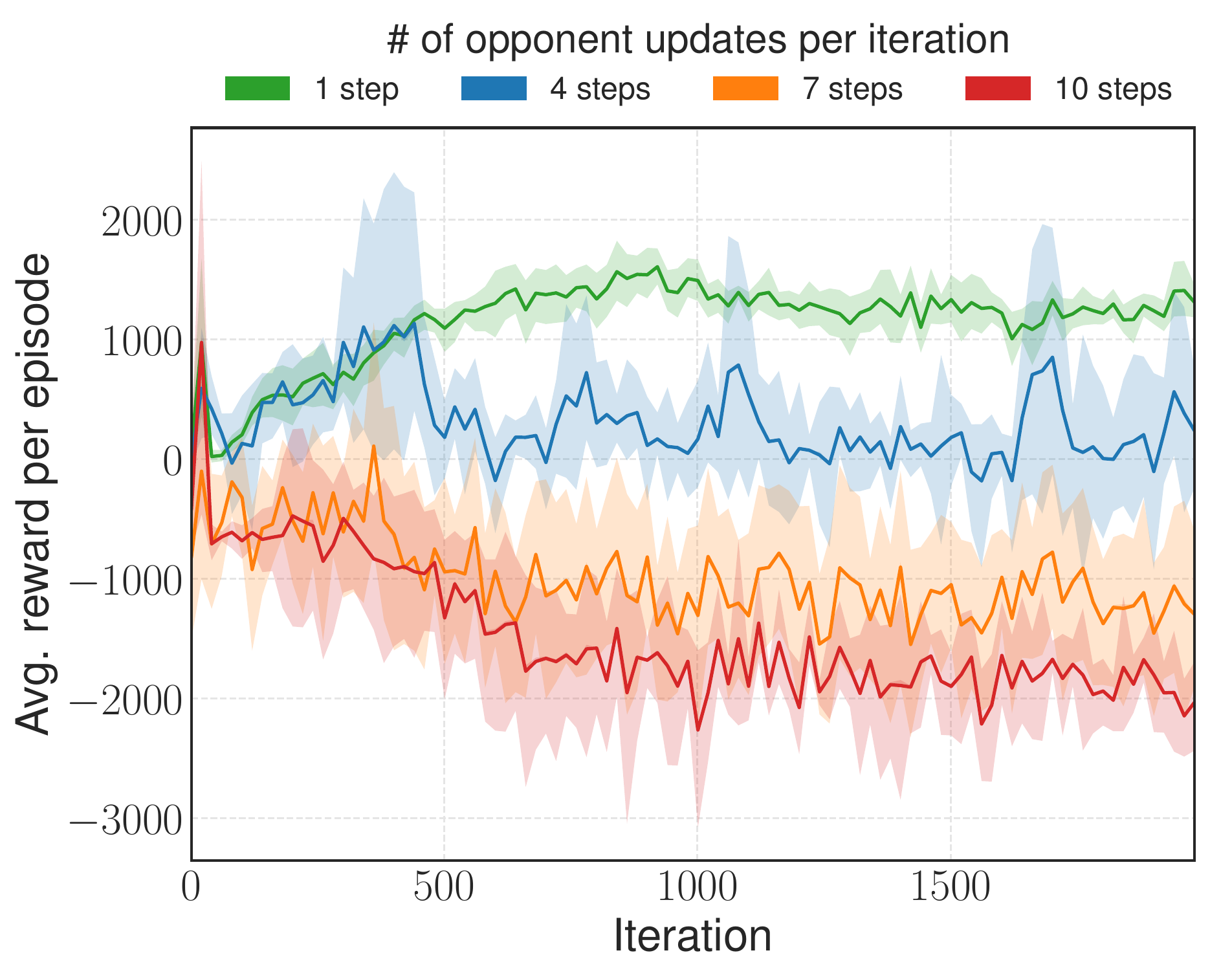}
\caption{%
Reward curves for a meta-learning agent trained against a learning opponent.
Both agents were \texttt{Ant}s with MLP policies.
At each iteration, the opponent was updating its policy for a given number of steps using self-play, while the meta-learning agent attempted to learn to adapt to the distributional shifts.
For each setting, the training process was repeated 15 times; shaded regions denote 90\% confidence intervals.
}\label{fig:robustness}
\end{figure}

\end{document}